\DeclareMathOperator*{\argmin}{argmin}
\DeclareMathOperator{\Span}{Span}
\DeclareMathOperator{\mindot}{min.}
\DeclareMathOperator{\Cone}{Cone}
\DeclareMathOperator{\Lasso}{Lasso}
\DeclareMathOperator{\PosLasso}{PosLasso}
\let\P\relax
\DeclareMathOperator{\P}{P}
\DeclareMathAlphabet{\mathbbb}{U}{bbold}{m}{n}
\def\var{{\rm var\,}}
\newcommand{\diff}{\mathrm{d}}
\newcommand{\R}{\mathbb{R}}
\newcommand{\pos}{\text{pos}}
\renewcommand{\neg}{\text{neg}}
\renewcommand{\leq}{\leqslant}
\renewcommand{\geq}{\geqslant}
\renewcommand{\var}{{x}}
\theoremstyle{plain}
\newtheorem{proposition}{Proposition}[section]
\newtheorem{theorem}[proposition]{Theorem}
\newtheorem{lem}[proposition]{Lemma}
\theoremstyle{definition}
\theoremstyle{remark}
\numberwithin{equation}{section}
\author{Raphaël Berthier\footnote{Sorbonne Université, Inria,
Centre Inria de Sorbonne Université,
Paris, France}}
\begin{document}

\title{Diagonal Linear Networks and the~Lasso~Regularization~Path}

\maketitle

\begin{abstract}
Diagonal linear networks are neural networks with linear activation and diagonal weight matrices. Their theoretical interest is that their implicit regularization can be rigorously analyzed: from a small initialization, the training of diagonal linear networks converges to the linear predictor with minimal $1$-norm among minimizers of the training loss. In this paper, we deepen this analysis showing that the full training trajectory of diagonal linear networks is closely related to the lasso regularization path. In this connection, the training time plays the role of an inverse implicit regularization parameter. Both rigorous results and simulations are provided to illustrate this conclusion. Under a monotonicity assumption on the lasso regularization path, the connection is exact while in the general case, we show an approximate connection.
\end{abstract}

\noindent

\newpage

\tableofcontents
\newpage

\section{Introduction}

\paragraph*{Context.}The composition of layers enables a neural network to learn a modular representation of the data. However, the reasons why learning the components of this representation is computationally amenable through gradient descent methods and why it leads to excellent generalization properties are still not fully understood \cite{bartlett2021deep}. 

In order to address this question, machine learning theory has studied the gradient flow training of \emph{linear} networks---where the activation is linear---and \emph{diagonal} linear networks (DLNs)---where, in addition, weight matrices are diagonal. We now provide a brief mathematical introduction to DLNs (see references below for more details).

Consider the minimization of quadratic function: 
\begin{equation*}
	\ell(\var) = \frac{1}{2} \langle \var, M \var \rangle - \langle r, \var \rangle \, , \qquad \var \in \R^d \, . 
\end{equation*}
We assume that this function is convex, i.e.~that $M$ is positive semidefinite, and that it is lower-bounded, i.e.~that $r \in \Span M$.
The minimization of such functions arises, e.g., when solving linear least-squares problems. 

A DLN (with two layers) consists in parametrizing $x = u \circ v$, where $\circ$ denotes the component-wise product of the two vectors $u, v \in \R^d$. We then train the weights $u, v$ by the gradient flow on the loss $\ell(u\circ v)$, potentially regularizing by the $2$-norm of $u,v$ (weight decay):
\begin{align}
	&L(u,v) = \ell(u\circ v) + \frac{\lambda}{2} \left(\Vert u \Vert^2 + \Vert v \Vert^2\right) \, , \qquad \lambda \geq 0 \, , \label{eq:def-L}\\
	&\frac{\diff u}{\diff t} = - \nabla_u L(u,v) \, , \qquad\frac{\diff v}{\diff t} = - \nabla_v L(u,v) \, . \label{eq:flow-uv-new}
\end{align} 
This induces a trajectory for the weights $u(t), v(t)$ as well as an effective linear parameter $x(t) = u(t) \circ v(t)$. 

A first observation is that, under mild assumptions, $x(t)$ converges to a minimizer of the lasso problem \begin{equation*}
	\mindot_{x \in \R^d} \left\{\ell(x) + {\lambda} \Vert x \Vert_1 \right\}\, .
\end{equation*}
See, e.g., \cite{tibshirani2021equivalences}. However, a stronger intriguing phenomenon was observed without weight decay ($\lambda = 0$): when initialized from a suitable infinitesimal initialization~$x(0)$, the training of DLNs is shown to converge to the minimizer of $\ell$ with mininimal $1$-norm \cite{woodworth2020kernel}. As a consequence, the minimizer selected by the training of a DLN benefits from a sparsifying effect, even without any explicit regularization present in the loss function $L$. This phenomenon, called implicit regularization, is beneficial for the generalization properties of the trained network. Implicit regularization has been observed in other neural network structures and suggested to be a key ingredient in the success of neural networks \cite{vardi2023implicit}. See, e.g., \cite{gunasekar2017implicit, li2018algorithmic, chizat2020implicit, li2021implicit} for contributions to the implicit regularization of neural networks and \cite{vaskevicius2019implicit,zhao2022high,woodworth2020kernel,haochen2021shape,li2021implicit,azulay2021implicit,pesme2021implicit,vivien2022label,nacson2022implicit,chou2021more,berthier2023incremental,pesme2023saddle} for more contributions specifically on DLNs. 

\paragraph*{Contributions.} In this article, we show that DLNs enjoy a stronger implicit regularization when early stopped, and connect the training trajectory of DLNs to the lasso regularization path.

For a parameter $\mu \geq 0$, we define the lasso objective as 
\begin{equation}
	\label{eq:def-lasso-objective}
	\Lasso(\var, \mu) = \ell(\var) + \left(\lambda+\frac{1}{\mu}\right) \Vert \var \Vert_1\, .
\end{equation}
This non-standard parametrization of the lasso regularization is convenient to separate the explicit regularization $\lambda \Vert x \Vert_1$ (that stems from weight decay) from the implicit regularization $\frac{1}{\mu} \Vert x \Vert_1$. The results below will clarify why it is convenient to parametrize the implicit regularization by its inverse regularization parameter $\mu$. In all of this paper, the reader can choose to consider the case $\lambda = 0$ for simplicity, as it is the most relevant case for the study of implicit regularization.

The connection between the training trajectory and the lasso regularization path holds in the same infinitesimal initialization limit as the convergence to the minimum $1$-norm solution described above. When $x(0)$ is of magnitude $\varepsilon \ll 1$, the time $t$ is jointly rescaled as $t(s,\varepsilon) = \frac{1}{2} \left(\log \frac{1}{\varepsilon}\right) s$. Consider the average of the trajectory of $x$:
\begin{equation*}
	\overline{x}(t) = \frac{1}{t}\int_0^t x(u) \diff u \, .
\end{equation*}
We then have the following informal result: \emph{for all $s > 0$, as $\varepsilon \to 0$, $\overline{x}(t(s,\varepsilon))$ (approximately) minimizes the lasso objective $\Lasso\left(., {s}\right)$.}

In the informal result above, the parenthesized word ``approximately'' refers to a more technical discussion. Under a certain monotonicity assumption on the lasso regularization path, we show that $\Lasso\left(\overline{x}(t(s,\varepsilon)), s\right)$ converges to the minimum of $\Lasso\left(., s\right)$ as $\varepsilon \to 0$. However, when the lasso regularization path is not monotone, we only show that $\overline{x}(t(s,\varepsilon))$ is an approximate minimizer of $\Lasso\left(., {s}\right)$, even in the limit $\varepsilon \to 0$, where the suboptimality gap is controled by the deviation from the monotonicity assumption. This distinction is also apparent in simulations. 

Overall, this result shows that early stopping the training of DLNs can offer a tradeoff between sparsity and data fitting. An earlier stopping time leads to a sparser linear model. This deepens the connection between DLNs and sparse regression. 

Our results also cover the case $x = u \circ u$, that is less motivated by the theoretical study of neural networks, but that is algebraically more elegant. As a consequence, we provide the proofs in the $u \circ u$ case first, and then deduce the proofs in the $u \circ v$ by a reduction to the $u \circ u$ case. This systematic reduction strategy might be of independent interest for the study of DLNs. 

\paragraph*{Outline.} Sec.~\ref{sec:uv} is the one of interest for most readers. It contains the results in the case $x = u \circ v$ and their discussions. Sec.~\ref{sec:monotone-uv} provides the exact connection under the monotonicity assumption on the lasso regularization path; Sec.~\ref{sec:simulations} presents simulations that illustrate the results; and Sec.~\ref{sec:nonmonotone-uv} covers the general case, beyond the monotonicity assumption. Sec.~\ref{sec:conclusion} concludes this section by discussing in more detail the related work and a speculative conclusion for the theory of neural networks. 

Sec.~\ref{sec:ucarre} contains the parallel results in the case $x = u \circ u$. Sec.~\ref{sec:proofs-ucarre} contains the proofs of the results in the $u \circ u$ case. This section contains the bulk of the technical effort of this paper, as the proofs in the $u \circ v$ case, provided in Sec.~\ref{sec:uv-proof}, are obtained by a systematic reduction to the $u \circ u$ case.

\paragraph*{Notations.}
	We denote $\langle . , . \rangle$ the canonical inner product on $\R^d$ and $\Vert . \Vert$ the associated norm. We note $\Vert . \Vert_1$ the $1$-norm on vectors: $\Vert x \Vert_1 = \sum_{i=1}^d |x_i|$.
	
	We use the notation $\var \geq 0$ to denote that $\var$ has all coordinates nonnegative. If $x$ is a real number, we denote $x_+ = \max(x,0)$ its positive part and $x_- = \max(-x, 0)$ its negative part.
	
	When $\varphi:\R\to\R$ and $x \in \R^d$, we denote $\varphi(x) = (\varphi(x_1), \dots, \varphi(x_d))$ the component-wise application of $\varphi$ to $x$. This notation is used, for instance, for $\varphi = \log$, for $\varphi(x) = x^2$, for $\varphi(x) = x_+$ or for $\varphi(x) = x_-$. We thus denote $x^2 = x \circ x$.

	We sometimes write explicitly the dependence of some positive constants on some parameters, for instance $C_i = C_i(M, r, \alpha)$. Without any index, the notation $C(\dots)$ indicates a positive constant that depends on the parameters between the parentheses. This constant can change from one occurrence to the next. 

	We denote $\mathbbb{1} = (1, \dots, 1)$ the vector of ones, whose dimension is implicit from the context.

	When $I$ is a subset of $\{1, \dots, d\}$, we denote $A_I$ the submatrix of $A$ where the columns whose index are in $I$ are selected, and $A_{I,I}$ the submatrix of $A$ where both the rows and the columns whose index are in $I$ are selected.

	When $f$ is a function of $z$, we denote $\mindot_{z} f(z)$ the minimization problem of~$f$, and $\min_z f(z)$ its minimal value. In other words, $\mindot$ stands for ``minimize'' and $\min$ stands for ``minimum''.

\section{The \texorpdfstring{$u \circ v$}{uv} case --- statement of the results}
\label{sec:uv}

\paragraph*{Setting.} We study the solution of Eqs.~\eqref{eq:def-L}, \eqref{eq:flow-uv-new} in the limit of small initialization. More precisely, we denote $u^\varepsilon(t)$ and $v^\varepsilon(t)$ the solutions of Eqs.~\eqref{eq:flow-uv-new} with initial conditions $u^\varepsilon(0) = \sqrt{\varepsilon} \beta$ and $v^\varepsilon(0) = \sqrt{\varepsilon} \gamma$, where $\beta$ and $\gamma$ are fixed vectors in $\R^d$ and $\varepsilon$ is a small parameter. We denote $\var^\varepsilon(t) = u^\varepsilon(t) \circ v^\varepsilon(t)$. The scalings in $\varepsilon$ are chosen so that $\var^\varepsilon(0)$ is of order $\varepsilon$. For non-degeneracy reasons, we assume that for all $i \in \{1, \dots, d\}$, $\beta_i \neq \pm \gamma_i$.

We also define $\overline{\var}^\varepsilon(t) = \frac{1}{t}\int_0^t \var^\varepsilon(u) \diff u$ the average of the trajectory of $\var^\varepsilon$. The goal of our analyses is to connect the average of the trajectory to the lasso optimization problem
\begin{equation}
	\label{eq:lasso}
	\mindot_{x \in \R^d} \Lasso(x, \mu) \, .  
\end{equation}
We denote $\Lasso_*(\mu) = \min_{x\in \R^d} \Lasso(x, \mu)$ the minimum of this optimization problem.

Define a rescaled time 
\begin{align}
	\label{eq:rescaled-time-uv}
	&s = \frac{2}{\log \frac{1}{\varepsilon}}t \, .
\end{align}

Below, we frequently abuse notations and use the same notation for rescaled functions of time. For instance, $\overline{x}^\varepsilon(s)$ denotes $\overline{x}^\varepsilon(t)$ where $t = t(\varepsilon) = \frac{s}{2}\log \frac{1}{\varepsilon}$.

\subsection{Connection to the lasso under a monotonicity assumption}
\label{sec:monotone-uv}

\begin{theorem}
	\label{thm:monotone-uv}
	For all $\mu >0$, let ${x}(\mu)$ denote a minimizer of the lasso
	\begin{equation*}
		\underset{{x} \in \R^d}{\mindot} \Lasso\left({x}, {\mu}\right) \, .
	\end{equation*}
	Assume that $\mu > 0 \mapsto \mu{x}(\mu)$ is coordinate-wise monotone. Then, for all $s>0$, 
	\begin{equation*}
		\Lasso \left(\overline{\var}^\varepsilon(s) , {s}\right) \xrightarrow[\varepsilon \to 0]{} \Lasso_*\left({s}\right) \, .
	\end{equation*}
\end{theorem}

We insist that in the above result, the rescaled time $s = \frac{2}{\log 1/\varepsilon}t$ is kept fixed and $t = t(\varepsilon) = \frac{s}{2}\log \frac{1}{\varepsilon}$ diverges as $\varepsilon \to 0$. This is necessary to obtain a non-trivial limit for $\overline{x}^\varepsilon$. 

This result identifies the rescaled time $s$ with the inverse implicit regularization parameter~$\mu$ of the lasso. In the limit $\varepsilon \to 0$, a single training trajectory of the DLN computes the full lasso regularization path. Different stopping times correspond to different regularization levels; the earlier the stopping, the more regularized the solution.

For instance, for $s$ small enough, the regularization level is large enough so that the unique lasso minimizer is $0 \in \R^d$. This is consistent with the fact that the DLN dynamics \eqref{eq:flow-uv-new} are initialized $\sqrt{\varepsilon}$-close to the fixed point $(u,v)= (0,0)$, thus they will escape this fixed point in a time proportional to $\log \frac{1}{\varepsilon}$. As a consequence, for $s$ small enough, $x = u\circ v$ is at $0$ (in the limit $\varepsilon \to 0$).

Further, as $s \to \infty$, we obtain that the regularization level is infinitesimal. In the case $\lambda = 0$ (no weight decay), this is consistent with the fact that the DLN dynamics converge to a minimizer of $\ell$ with minimal $1$-norm \cite{woodworth2020kernel}. 

However, Thm.~\ref{thm:monotone-uv} provides an interpolation between these two asymptotics. For finite $s$, $\overline{x}^\varepsilon(s)$ minimizes $\Lasso(.,s)$ in the limit $\varepsilon \to 0$. 

We insist on the fact that $\Lasso(.,s)$ does not have a unique minimizer a priori, thus one can not define \emph{the} lasso minimizer or \emph{the} lasso regularization path. Our claim is that, if there is \emph{a} lasso regularization path $x(\mu)$ such that $\mu \mapsto \mu x(\mu)$ is monotone, then $\overline{x}^\varepsilon(s)$ minimizes $\Lasso(.,s)$ in the limit $\varepsilon \to 0$. In particular, we do not claim that $\overline{x}^\varepsilon(s) \xrightarrow[\varepsilon \to 0]{} x(s)$. To the best of our understanding, the limit of $\overline{x}^\varepsilon(s)$ depends more subtely on the initialization vectors $\beta$ and $\gamma$; this will not be discussed further in this paper.

A sufficient condition for $\mu \mapsto \mu {x}(\mu)$ to be monotone is to have the lasso regularization path $\mu \mapsto {x}(\mu)$ itself monotone (as $\lim_{\mu\to0^+}{x}(\mu) = 0$). Under the latter condition, the lasso regularization path has already been shown to be equivalent to computationally cheaper algorithms such as the least angle regression algorithm~\cite{efron2004least}, incremental forward stagewise regression~\cite{hastie2007forward} or boosting~\cite{rosset2004boosting}. Moreover, it is also argued that even when $\mu \mapsto x(\mu)$ is not monotone, the solutions of these different algorithms might not be very far, see for example Figure $1$ in \cite{efron2004least}. A sufficient condition for the lasso regularization path to be monotone, derived in the latter article, is to have $S (M_{II})^{-1} S \mathbbb{1} \geq 0$ for all subsets $I \subset \{1, \dots, d \}$ and for all diagonal sign matrices $S \in \R^{\vert I \vert \times \vert I \vert}$ (see also \cite{hastie2007forward}, Sec.~6, for a succinct statement).

The monotonicity of $\mu \mapsto \mu x(\mu)$ specifically has been studied, to the best of our knowledge, only in the case of the positive lasso, which corresponds to the parametrization $x = u^2$, see Sec.~\ref{sec:monotone-u2}. 

\subsection{Simulations}
\label{sec:simulations}

In order to illustrate the results of this paper and discuss the importance of the monotonicity assumption, we provide some simulations. The code is available on github \cite{berthier_dln_lasso}.

We randomly generate problem instances in dimension $d = 4$ as follows. We consider the quadratic function $\ell(x) = \frac{1}{2} \Vert Xx - y \Vert^2$, where $n = 3$, $X \in \R^{n \times d}$ and $y \in \R^n$ are independent and have i.i.d.~standard normal entries. We do not put any explicit regularization: $\lambda = 0$.

In this case, the lasso regularization path is almost surely unique \cite{tibshirani2013lasso}. We observe empirically that $\mu \mapsto \mu x(\mu)$ is monotone with probability $0.76$, over $1000$ random instances. Thus Theorem~\ref{thm:monotone-uv} applies in a majority of cases.

In all simulations, we take $\varepsilon = 10^{-5}$, $\beta = \mathbbb{1}$ and $\gamma = 0$.

Figure~\ref{fig:simulations} compares the average trajectory $\overline{x}^\varepsilon(s)$ of DLNs with the lasso regularization path $x(\mu)$ on problems generated randomly as above. When identifying $s = \mu$, the trajectory $\overline{x}^\varepsilon(s)$ is qualitatively similar to $x(s)$ in all instances, and the suboptimality gap $\left(\Lasso\left(\overline{x}^\varepsilon(s), s\right) - \Lasso_*\left(s\right)\right)/\Lasso_*\left(s\right)$ remains relatively small (below $0.04$ on the provided instances). However, the comparison is much tighter in the monotone case than in the nonmonotone case; in particular, the suboptimality gap is an order of magnitude smaller for monotone instances.

\begin{figure}
	\begin{subfigure}{\linewidth}
		\includegraphics[width=0.48\linewidth]{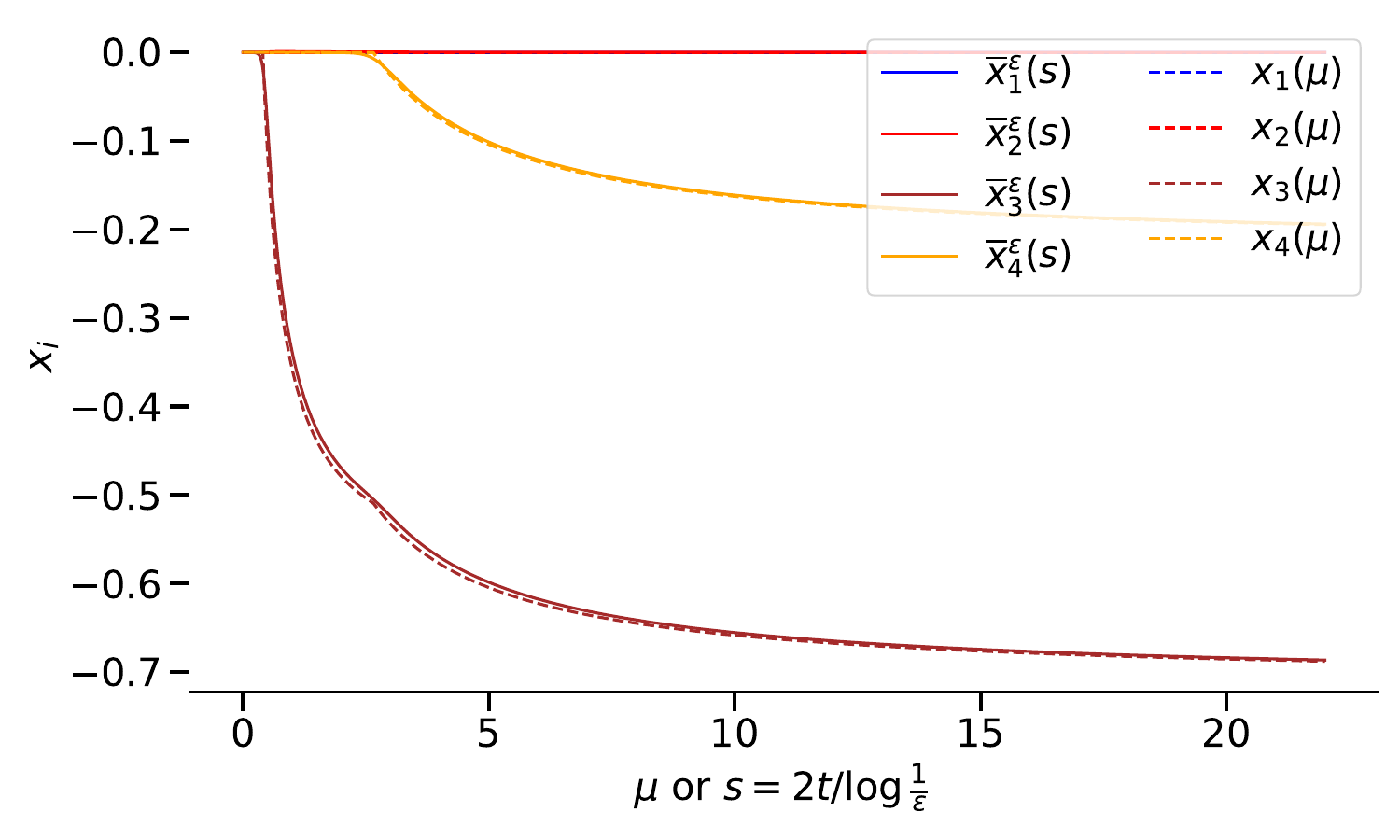}
		\includegraphics[width=0.48\linewidth]{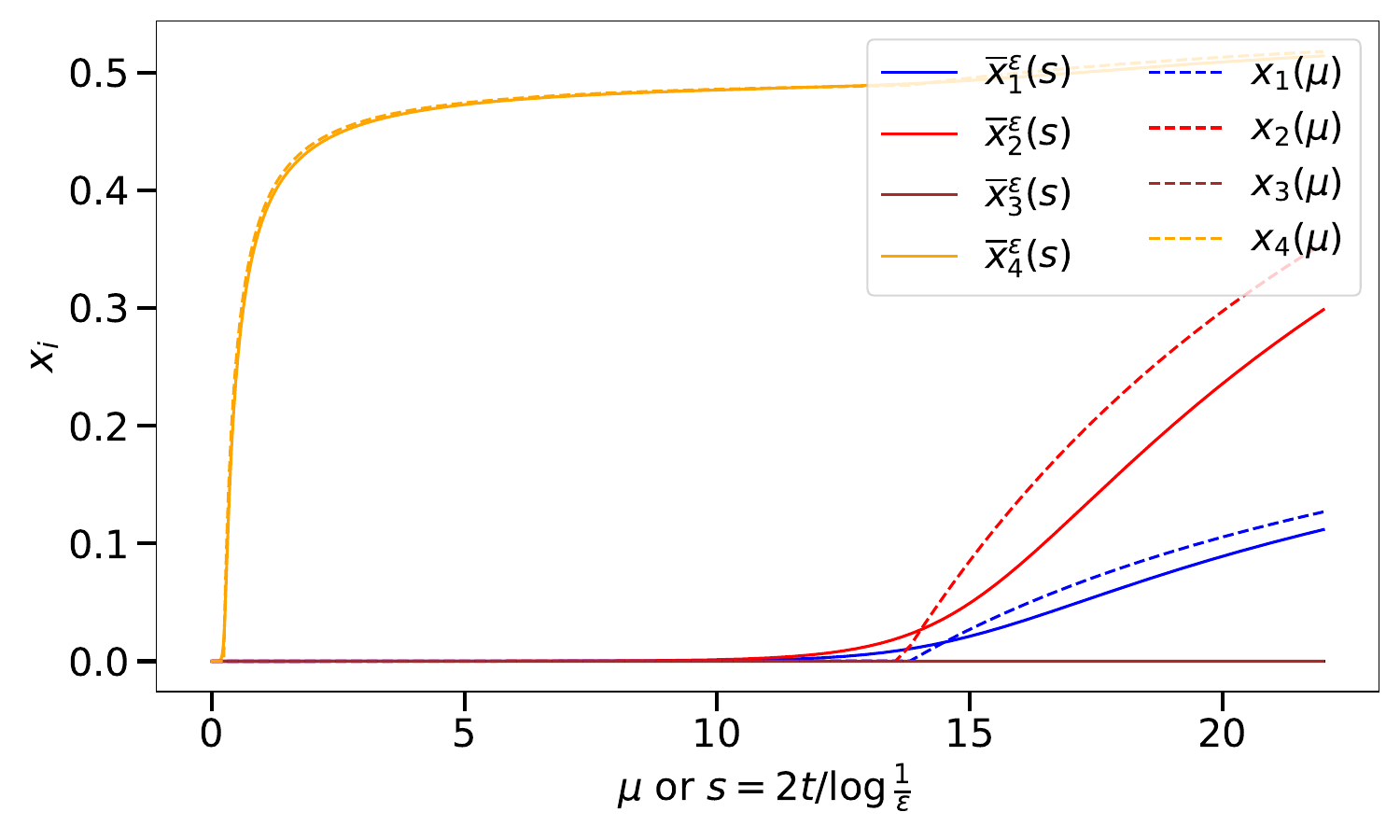}
		\includegraphics[width=0.48\linewidth]{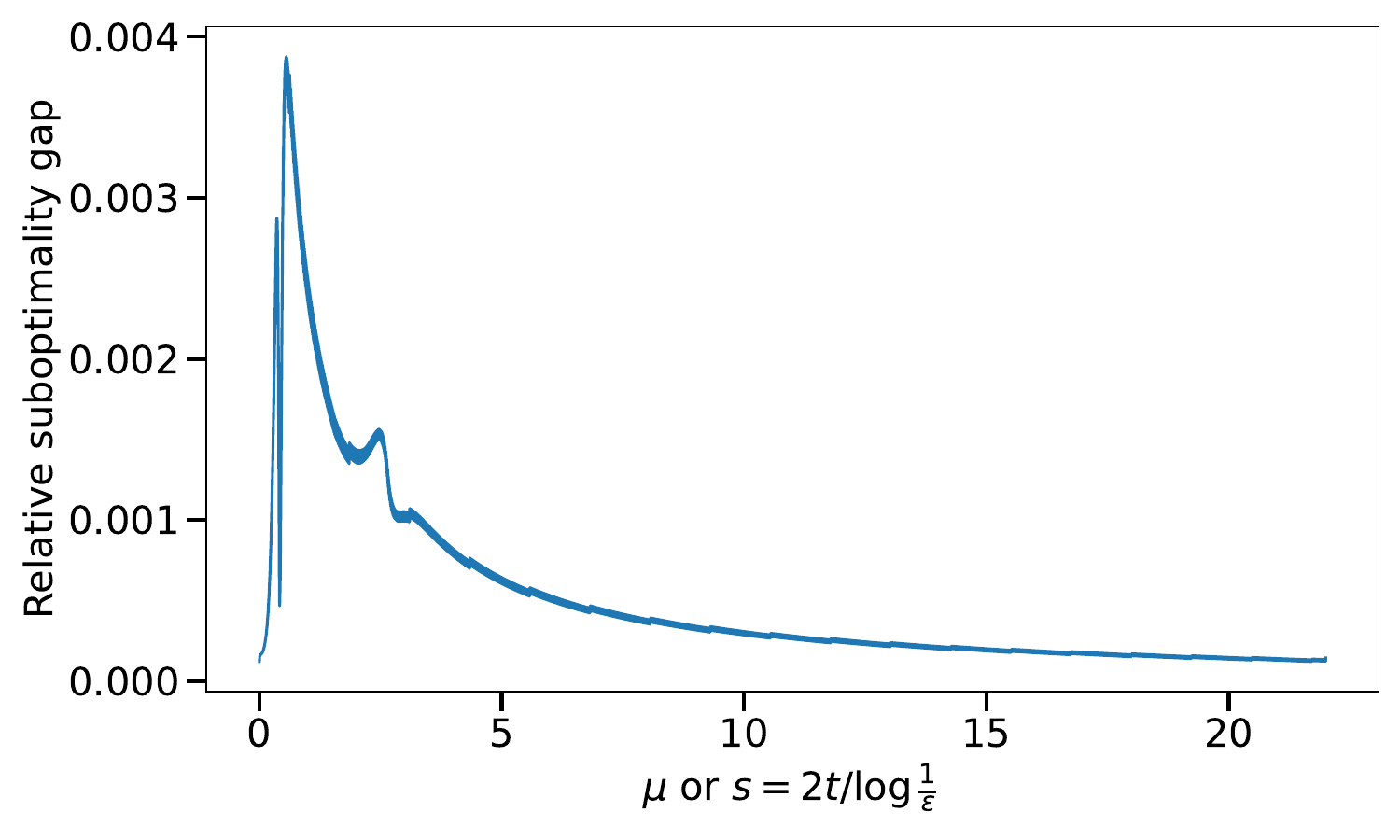}
		\includegraphics[width=0.48\linewidth]{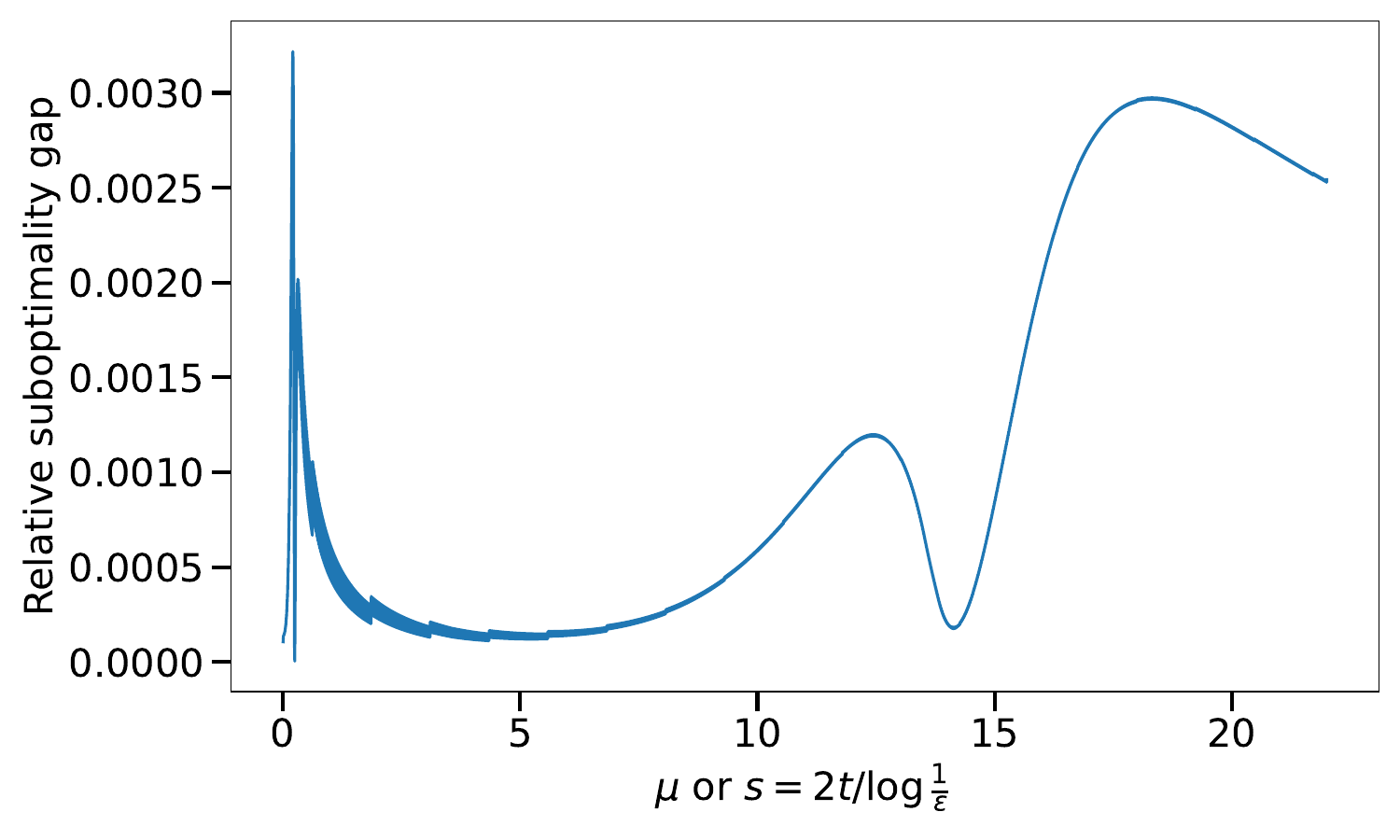}
		\vspace*{-4mm}
		\caption{Monotone instances}
		\vspace*{5mm}
		\label{subfig:monotone}
	\end{subfigure}
	\begin{subfigure}{\linewidth}
		\includegraphics[width=0.48\linewidth]{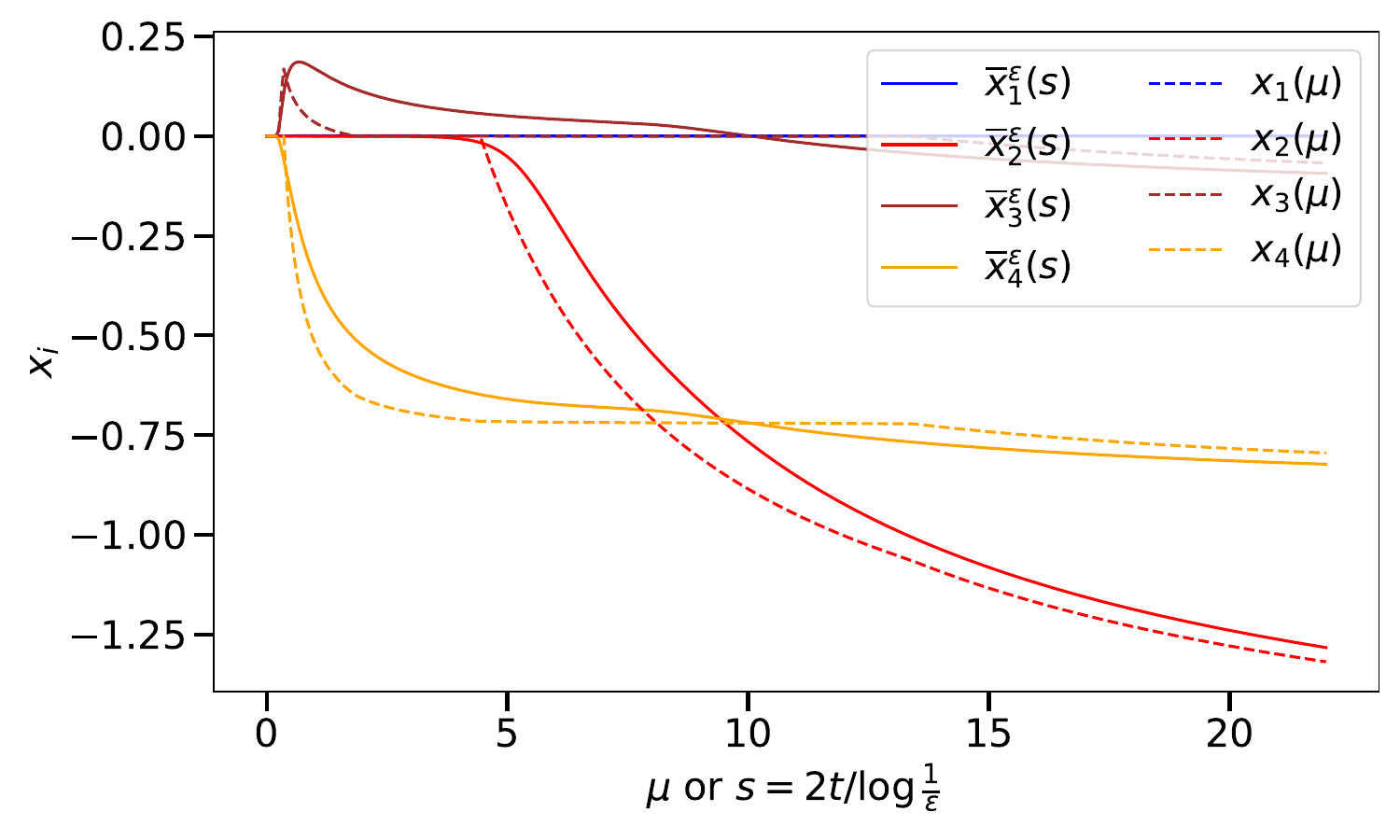}
		\includegraphics[width=0.48\linewidth]{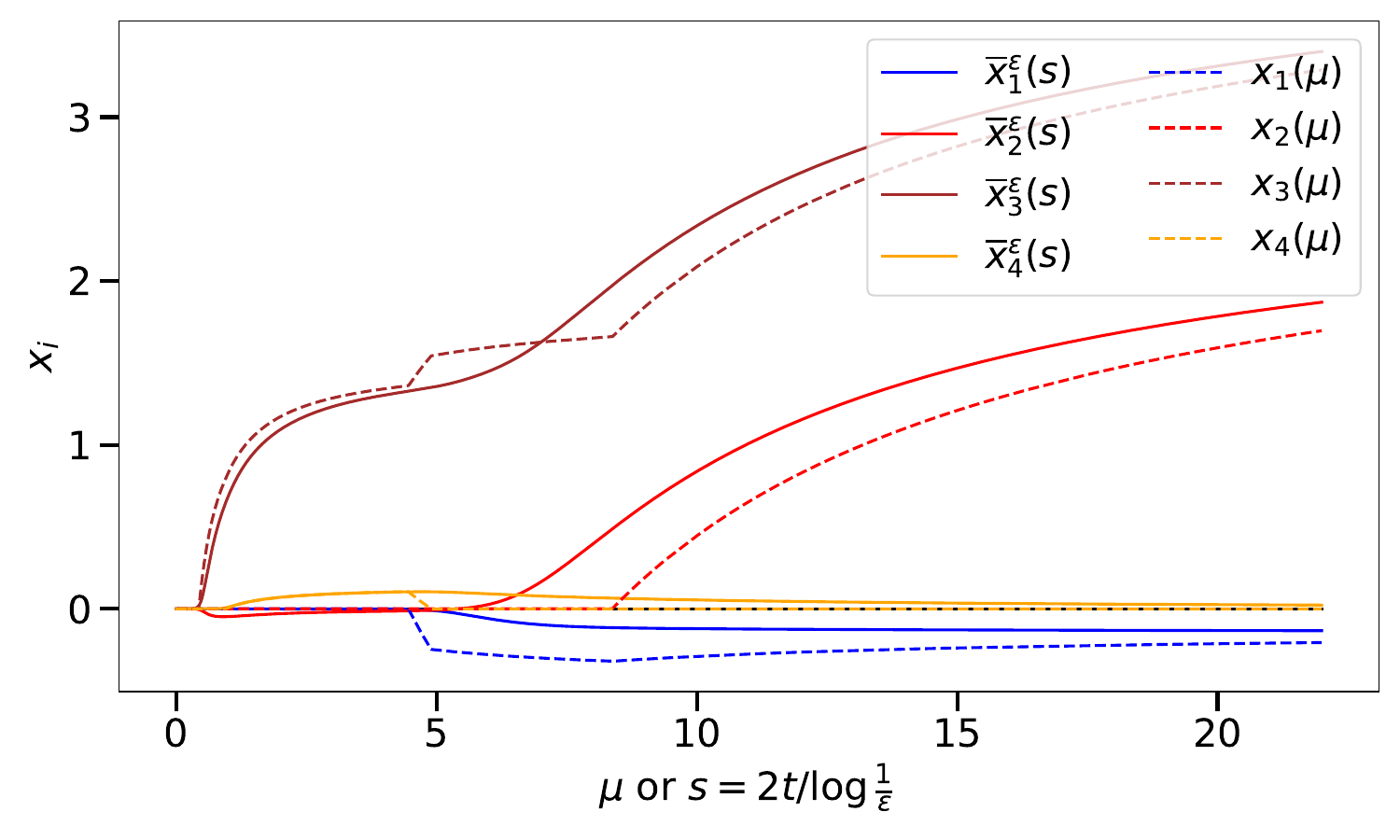}
		\includegraphics[width=0.48\linewidth]{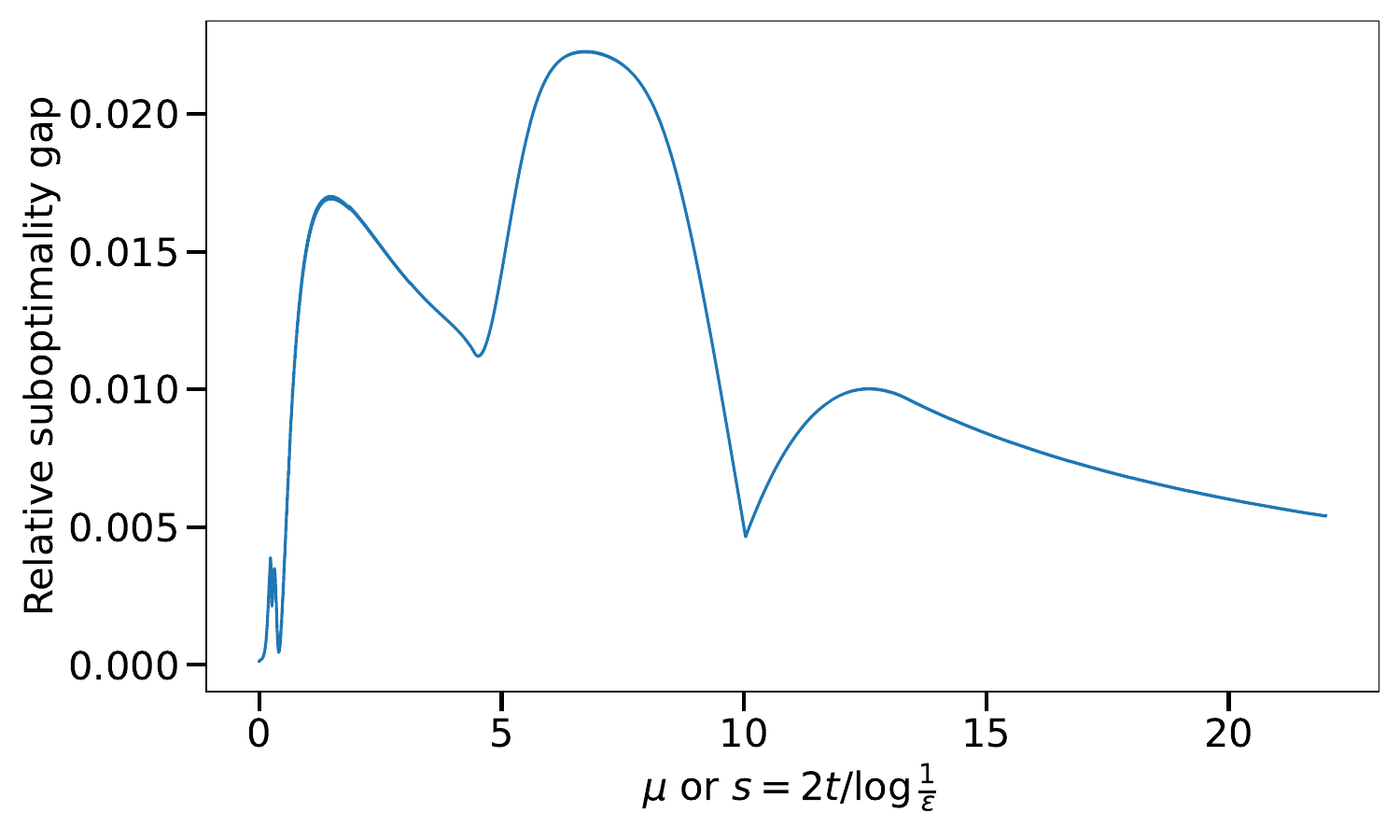}
		\includegraphics[width=0.48\linewidth]{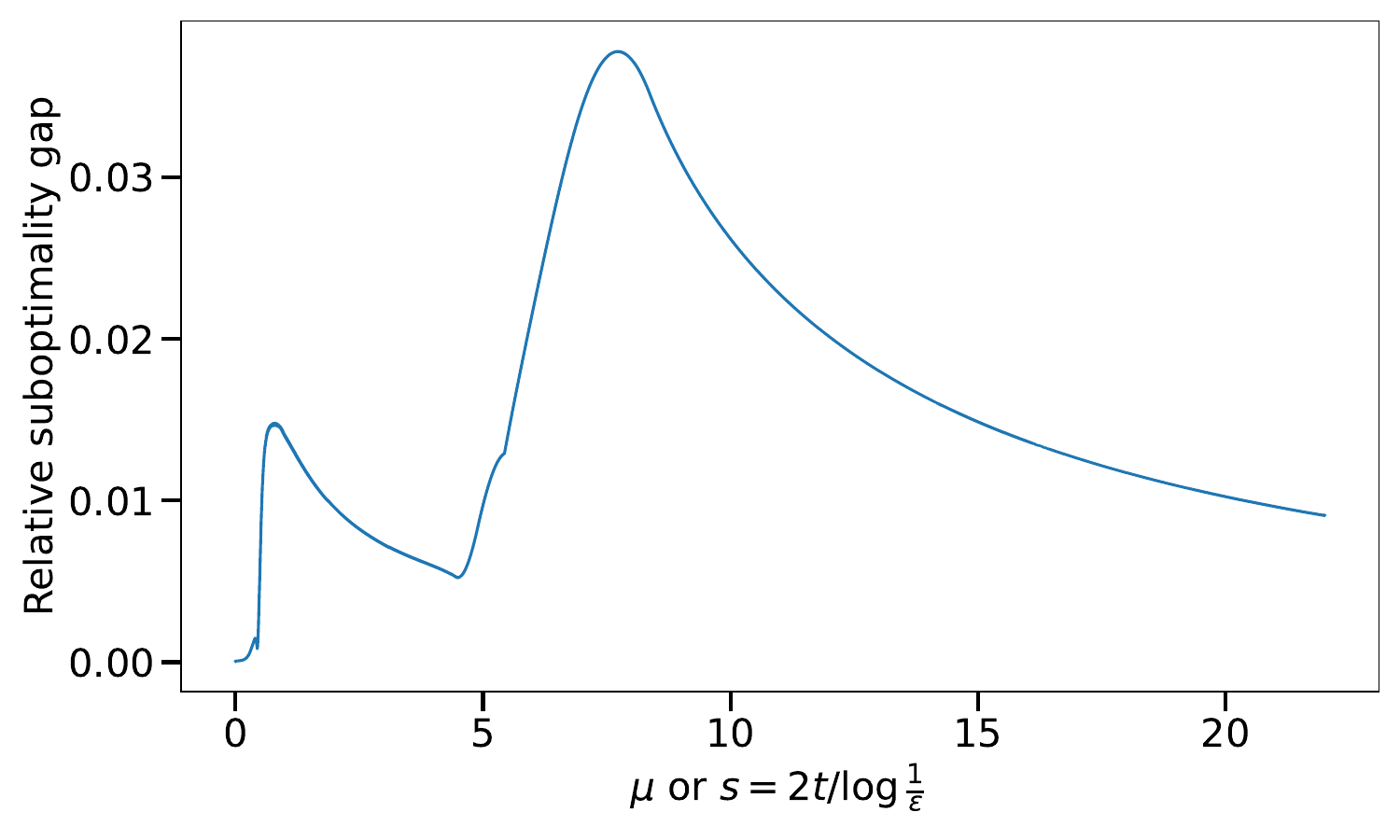}
		\vspace*{-4mm}
		\caption{Nonmonotone instances}
	\end{subfigure}
	\caption{Comparison between the average trajectory $\overline{x}^\varepsilon(s)$ of DLNs and the lasso regularization path $x(\mu)$. In the subfigure (a), problems instances are generated randomly conditionally on the monotonicity of $\mu \mapsto \mu x(\mu)$. Conversely, in the subfigure (b), problems instances are generated randomly conditionally on the non-monotonicity of $\mu \mapsto \mu x(\mu)$. We provide two instances of each case, one in each column. In each instance, we plot the coordinates $\overline{x}^\varepsilon_i(s)$ along with $x_i(\mu)$, and the suboptimality gap $\left(\Lasso\left(\overline{x}^\varepsilon(s), s\right) - \Lasso_*\left(s\right)\right)/\Lasso_*\left(s\right)$. Simulation details are provided in Sec.~\ref{sec:simulations}.}
	\label{fig:simulations}
\end{figure}

\subsection{Approximate connection to the lasso in the general case}
\label{sec:nonmonotone-uv}

We now provide an approximate optimality result in the nonmonotone case that comforts the simulations. 

\begin{theorem}
	\label{thm:nonmonotone-uv}
	For all $\mu > 0$, let ${x}(\mu)$ denote a minimizer of the lasso
	\begin{equation*}
		\underset{{x} \in \R^d}{\mindot} \Lasso\left({x}, {\mu}\right) \, .
	\end{equation*}
	Assume that $\mu > 0 \mapsto {x}(\mu)$ is absolutely continuous on compact subsets of $(0, +\infty)$. Define $z(\mu) = \mu {x}(\mu)$ and
	\begin{equation}
		\label{eq:z-downarrow-uv}
		z^{\downarrow}(\mu) = \sum_{i=1}^{d} \int_{0}^{\mu} (1+u)\left[\left(\frac{\diff \left(z_i(u)_+\right)}{\diff u} \right)_- + \left(\frac{\diff \left(z_i(u)_-\right)}{\diff u} \right)_-\right] \diff u \, . 
	\end{equation}
	Heuristically, $z^{\downarrow}(\mu)$ quantifies the deviation from monotonicity of $z(\mu)$. 

	Then, there exists a constant $C = C(d,M,r,\lambda, \beta, \gamma) > 0$ such that, for all $s>0$,
	\begin{align}
		&\limsup_{\varepsilon \to 0} \Lasso \left(\overline{\var}^\varepsilon(s) , s\right) 
		\leq \Lasso_*\left(s\right) + C \eta(\lambda, s, z^\downarrow(s)) \, , \nonumber\\
		&\text{where } \eta(\lambda, s, z^\downarrow) = (1+\lambda s) \left[\frac{(z^\downarrow)^{1/2}}{s} + \frac{z^\downarrow}{s^2} \right] \, . \label{eq:def-eta}
	\end{align}
\end{theorem}

Note that $\eta(\lambda, s, 0) = 0$; this enables to derive Thm.~\ref{thm:monotone-uv} from Thm.~\ref{thm:nonmonotone-uv}. The above theorem predicts that the suboptimality gap spikes when the lasso regularization path is not monotone (when $z^\downarrow(s)$ is large) and then decays afterwards (as $s$ increases). This is consistent with the observations in Figure~\ref{fig:simulations}.

\subsection{Related work and final remarks}
\label{sec:conclusion}

\paragraph*{Additional related work.}
In contrast with many works on diagonal linear networks, this paper does not assume any statistical model for the data that generated the loss $\ell$, and focuses only on the optimization analysis. In particular, our results could cover noisy or noiseless data, and general design matrices. This is in contrast with many works that require a restricted isometry property, or a coherence assumption on the design matrix \cite{vaskevicius2019implicit,zhao2022high, li2021implicit}. 

Under a statistical linear model with additive noise, early stopping is used in DLNs without weight decay to achieve optimal rates \cite{vaskevicius2019implicit,zhao2022high, li2021implicit}. Similarly, in the same setting, the regularization parameter of the lasso is usually chosen proportional to the magnitude of the additive noise, see \cite{hastie2015statistical}, Chapter 11. Our work sheds light on this correspondence: a non-zero regularization parameter corresponds to a finite stopping time.

More specifically, some earlier works have noted the similarity between the trajectory of DLNs and the lasso regularization path \cite{vaskevicius2019implicit, berthier2023incremental, pesme2023saddle}. Vaskevicius et al.~noted that $t / \log \frac{1}{\varepsilon}$ plays the role of an inverse regularization parameter in their statistical guarantee~\cite{vaskevicius2019implicit} (see discussion below their Thm.~1). However, they note that ``the gradient descent optimization path [of DLNs] [...] exhibits qualitative and quantitative differences from the lasso regularization path.'' Our contribution argues that the qualitative differences are actually due to the fact that the lasso regularization path corresponds to the \emph{averaged} gradient descent optimization path. Finally, the author proved Thm.~\ref{thm:monotone-u2}, the equivalent of Thm.~\ref{thm:monotone-uv} in the case $x = u \circ u$, but only when $M$ is a matrix with non-positive off-diagonal entries~\cite{berthier2023incremental}. (In this case, the monotonicity assumption is guaranteed.)

\paragraph*{Final remarks.} This work strengthens our understanding of the implicit regularization of neural networks. Not only the asymptotic convergence point is implicitely regularized, but also the full trajectory of the neural network. The earlier the neural network is stopped, the more regularized the solution is. 

Our proof technique provides a way to analyze the implicit regularization in a \emph{dynamical} way, that is, as time evolves. We hope that this new perspective will help to tackle a major difficulty of the implicit regularization line of research: we do not know how to describe the implicit regularization of more complex neural networks, beyond DLNs. For instance, the implicit regularization of matrix factorization problems can not be described by the nuclear norm, as one would expect from a natural generalization of the results on DLNs \cite{arora2019implicit,li2020towards}. We hope that this discrepancy can be understood through a dynamical study of the deviation between matrix factorization dynamics and nuclear norm minimizers.

\section{The \texorpdfstring{$u \circ u$}{u squared} case --- statement of the results}
\label{sec:ucarre}

\paragraph*{Setting.}In this section, we state an alternative version of the results of this paper, in the case where the regressor $\var$ is parametrized as $\var = u^2 = u \circ u$ instead of $\var = u \circ v$. It could be seen as a weight-tied version of DLNs where we impose the same weights in the two layers: $u=v$. Actually, this weight-tied model of neural networks is not of direct interest, and it has the advantage of constraining the sign of $\var$. However, it enables a more elegant theory. Moreover, as we discuss in Section~\ref{sec:uv-proof}, the case $\var = u \circ v$ can be reduced to the case $\var = u^2$, thus in fact the bulk of the theory is contained in this case, and the case $\var = u \circ v$ follows as corollaries.

Taking $u=v$ in Eqs.~\eqref{eq:def-L}, \eqref{eq:flow-uv-new}, we obtain the dynamics in the $x = u \circ u$ case:
\begin{align}
	&L(u) = \ell(u^2) + \lambda \Vert u \Vert^2 \, , \qquad \lambda \geq 0 \, , \label{eq:def-L-u2} \\
	&\frac{\diff u}{\diff t} = - \nabla_u L(u) \, . \label{eq:flow-u2-new}
\end{align}
In this case, the gradient flow dynamics can be written a closed-form differential equation in $\var = u^2$:
\begin{align}
	\label{eq:flow-u2-theta}
	\frac{\diff \var}{\diff t} = - 4 \var \circ \nabla \ell(\var) - 4 \lambda \var \, .
\end{align}
In this section, we denote $u^\varepsilon(t)$ the solution of Eq.~\eqref{eq:flow-u2-new} with initial condition $u^\varepsilon(0) = \sqrt{\varepsilon} \alpha$ where $\alpha$ is a fixed vector in $\R^d$ and $\varepsilon$ is a small parameter. We denote $\var^\varepsilon(t) = u^\varepsilon(t)^2$. We assume that all coordinates $\alpha_i$ of $\alpha$ are nonzero: if $\alpha_i = 0$, by Eq.~\eqref{eq:flow-u2-theta}, the coordinate of $\var^\varepsilon_i(t)$ remains zero for all $t \geq 0$, thus this case is degenerate. 

Again, we define $\overline{\var}^\varepsilon(t) = \frac{1}{t}\int_0^t \var^\varepsilon(u) \diff u$ the average of the trajectory of $\var$. The goal of our analyses is to connect the average of the trajectory to the positive lasso optimization problem 
\begin{equation}
	\label{eq:positive-lasso}
	\underset{x \geq 0}{\mindot} \Lasso\left(x, \mu\right) \, ,
\end{equation}
where the lasso objective is defined in Eq.~\eqref{eq:def-lasso-objective}. 

We denote $\PosLasso_*(\mu) = \min_{x \geq 0} \Lasso(x, \mu)$ the minimum of this optimization problem. 

Define a rescaled time 
\begin{align}
	\label{eq:rescaled-time-u2}
	&s = \frac{4}{\log \frac{1}{\varepsilon}}t \, .
\end{align}

\subsection{Connection to the positive lasso under a monotonicity assumption}
\label{sec:monotone-u2}

\begin{theorem}
	\label{thm:monotone-u2}
	For all $\mu >0$, let ${x}(\mu)$ denote a minimizer of the positive lasso
	\begin{equation*}
		\underset{{x} \geq 0}{\mindot} \Lasso\left({x}, {\mu}\right) \, .
	\end{equation*}
	Assume that $\mu > 0 \mapsto \mu{x}(\mu)$ is coordinate-wise nondecreasing. Then, for all $s>0$, 
	\begin{equation*}
		\Lasso \left(\overline{\var}^\varepsilon(s) , {s}\right) \xrightarrow[\varepsilon \to 0]{} \PosLasso_*\left({s}\right) \, .
	\end{equation*}
\end{theorem}

The monotonicity of $\mu \mapsto \mu x(\mu)$ translates into the monotonicity of the solution of the parametric linear complementarity problem \eqref{eq:parametric-LCP}, introduced in Sec.~\ref{sec:param-LCP}. This monotonicity has first been studied in a different context, the analysis of elastoplastic structures~\cite{cottle1973parametric}. Cottle proposed an algorithm to test whether the solution of a given problem is monotone, and derived necessary and sufficient conditions under which monotonicity holds for all possible values of~$r$~\cite{cottle1972monotone}. See also~\cite{megiddo1977monotonicity} when $M$ is non-invertible. A sufficient condition for monotonicity is when $M$ is a positive definite matrix with non-positive off-diagonal entries. The author has proved Thm.~\ref{thm:monotone-u2} under this more restrictive assumption~\cite{berthier2023incremental}.

\subsection{Approximate connection to the positive lasso in the general case}
\label{sec:nonmonotone-u2}

We now turn to the general case.

\begin{theorem}
	\label{thm:nonmonotone-u2}
	For all $\mu > 0$, let ${x}(\mu)$ denote a minimizer of the positive lasso
	\begin{equation*}
		\underset{{x} \geq 0}{\mindot} \Lasso\left({x}, {\mu}\right) \, .
	\end{equation*}
	Assume that $\mu > 0 \mapsto {x}(\mu)$ is absolutely continuous on compact subsets of $(0, +\infty)$. Define $z(\mu) = \mu {x}(\mu)$ and
	\begin{equation}
		\label{eq:z-downarrow-u2}
		z^{\downarrow}(\mu) = \sum_{i=1}^{d} \int_{0}^{\mu} (1+u) \left(\frac{\diff z_i(u)}{\diff u} \right)_- \diff u \, . 
	\end{equation}
	Heuristically, $z^{\downarrow}(\mu)$ quantifies the deviation from monotonicity of $z(\mu)$. 

	Then, there exists a constant $C = C(d,M,r,\lambda,\alpha) > 0$ such that, for all $s>0$,
	\begin{align*}
		&\limsup_{\varepsilon \to 0} \Lasso \left(\overline{\var}^\varepsilon(s) , {s}\right) 
		\leq \PosLasso_*\left({s}\right) + C \eta(\lambda, s, z^\downarrow(s)) \, , 
	\end{align*}
	where $\eta(\lambda, s, z^\downarrow)$ is defined in Eq.~\eqref{eq:def-eta}.
\end{theorem}

\section{The \texorpdfstring{$u \circ u$}{u squared} case --- proofs of the results}
\label{sec:proofs-ucarre}

This section is organized as follows. In Sec.~\ref{sec:sketch}, we provide a sketch of the proof of Thm.~\ref{thm:monotone-u2}. The following four subsections gather some elements to prepare the detailed proofs. In Sec.~\ref{sec:mirror-flow}, we recall the mirror flow perspective on the DLN dynamics \cite{li2022implicit}. In Sec.~\ref{sec:uniform-bound}, we prove a preliminary bound on the DLN dynamics, that is uniform in time $t$ and for $\varepsilon$ small enough. Its proof is independent from the rest of the proofs, and can be skipped in a first reading. Sec.~\ref{subsubsec:LCP} introduces linear complementary problems. Sec.~\ref{sec:param-LCP} extends this introduction to \emph{parametric} linear complementarity problems, where a parameter $\mu$ varies in the problem. Finally, in Sec.~\ref{sec:proof-nonmonotone-u2}, we prove Thm.~\ref{thm:nonmonotone-u2} and in Sec.~\ref{sec:proof-monotone-u2}, we deduce the proof of Thm.~\ref{thm:monotone-u2}. 

\paragraph*{Additional notations for the proofs.}  
When applied to a $d \times d$ matrix, $\Vert . \Vert$ denotes the operator norm associated to the norm $\Vert . \Vert$ on $\R^d$. When $A$ is a positive semidefinite matrix, we denote $\langle . , . \rangle_A = \langle ., A . \rangle$ the semidefinite inner product associated to $A$ and $\Vert . \Vert_A$ the associated seminorm.

When $A \in \R^{d \times k}$ is a matrix, we denote $A^\dagger \in \R^{k \times d}$ its Moore-Penrose pseudo-inverse.  

If $f : \R^d \to \R$ is a function that has a unique minimizer on a set $C$, then we denote $\argmin_{x \in C} f(x)$ this unique minimizer.

\subsection{Sketch of proof of Thm.~\ref{thm:monotone-u2}}
\label{sec:sketch}

In this section, we give the main intuitions of the proof. The detailed proof provided in the next sections does not follow exactly this sketch for technical reasons, but we still hope that this section will help the reader to interpret the detailed proof.

On a conceptual level, Thm.~\ref{thm:monotone-u2} draws a connection between the positive lasso regularization path and the DLN dynamics. The relationship between both objects is far from obvious. We propose to understand the connection through two intermediate objects: parametric linear complementarity problems (LCPs) and LCPs with derivatives. The structure of connections is summarized below. 

\bigskip
\begin{tikzpicture}[
    every node/.style={align=center},
    bigarrow/.style={black, very thick, <->},
    vertarrow/.style={black, very thick, <->},
]

\node (lasso) at (-3,1) {
    \textbf{\large Positive lasso}\\[6pt]
    $\displaystyle
    \min_{x \geq 0}\ \text{Lasso}(x,\mu)
    $
};

\node (dln) at (3,1) {
    \textbf{\large DLN dynamics}\\[6pt]
    $\displaystyle
    \frac{\diff \var}{\diff t} = - 4 \var \circ \nabla \ell(\var) - 4 \lambda \var
    $
};

\node (lcp) at (-3,-2) {
    \textbf{\large Parametric LCP}\\[8pt]
    $\displaystyle
    \begin{cases}
    w = Mz - \mu r + (1+\mu\lambda)\mathbbb{1}, \\
    w \geq 0,\ z \geq 0,\ \langle w,z\rangle = 0.
    \end{cases}
    $
};

\node (dlcp) at (3,-2) {
    \textbf{\large LCP with derivatives}\\[8pt]
    $\displaystyle
    \begin{cases}
    w = Mz - s r + (1+s \lambda)\mathbbb{1}, \\
    w \geq 0,\ \dfrac{\diff z}{\diff s} \geq 0,\ 
    \left\langle w,\dfrac{\diff z}{\diff s}\right\rangle = 0.
    \end{cases}
    $
};

\draw[bigarrow] (lcp.east) -- (dlcp.west);

\draw[vertarrow] (lasso.south) -- (lcp.north);
\draw[vertarrow] (dln.south) -- (dlcp.north);

\end{tikzpicture}
\bigskip

In this diagram, vertical arrows are connections provided by duality, in a sense that we make precise below. Once these dual perspectives are established, the positive lasso regularization path and the DLN dynamics are transformed into a parametric LCP and a LCP with derivatives, respectively. Their connection is then much easier to grasp, as suggested by the similarity in the names and in the mathematical formulations. Note that a matching between the parametric LCP and the LCP with derivatives requires to identify $s = \mu$, i.e., the rescaled time $s$ corresponds to an inverse regularization parameter $\mu$. 

We now provide more details on each of these connections. 

\paragraph*{The lasso regularization path and the parametric LCP.} The LCP corresponds to the primal-dual formulation of the positive lasso, where $z$ is the primal variable and $w$ the dual variable. As $\Lasso(.,\mu)$ is a convex quadratic function on the constraint set $\{x \geq 0\}$, the stationarity condition writes as a linear equation, and is combined with primal feasibility $z \geq 0$, dual feasibility $w \geq 0$ and complementary slackness $\langle w, z \rangle = 0$. The lasso regularization path corresponds to the parametrization of this LCP as a function of $\mu$, hence the name \emph{parametric LCP}. This connection is detailed in Secs.~\ref{subsubsec:LCP}--\ref{sec:param-LCP}.

\paragraph*{The DLN dynamics and the LCP with derivatives.} The DLN dynamics can also be given a primal-dual interpretation, through their mirror flow interpretation (see Sec.~\ref{sec:mirror-flow}): the DLN dynamics~\eqref{eq:flow-u2-theta} can be written as a mirror flow
\begin{align*}
&\frac{\diff \nabla h(\var)}{\diff t} = - \nabla \widetilde{L}(\var) \, , &&\text{where } \widetilde{L}(\var) = \ell(\var) + \lambda \langle \mathbbb{1}, \var \rangle \, .
\end{align*}
where $h(x) = \frac{1}{4}\sum_{i=1}^{d} (x_i \log x_i - x_i)$ is the entropy potential function; $\nabla h(\var) = \frac{1}{4} \log \var$ is the dual variable. For scaling reasons as $\varepsilon \to 0$, we will actually consider a rescaled dual variable 
\begin{align*}
	w^\varepsilon(s) &= - \frac{4}{\log \frac{1}{\varepsilon}} \nabla h(x^\varepsilon(s)) = - \frac{1}{\log \frac{1}{\varepsilon}} \log x^\varepsilon(s) \, . 
\end{align*}
This rescaled version particularly fits the analysis in the rescaled time $s$. For instance, 
\begin{equation*}
	\frac{\diff w^\varepsilon}{\diff s} = -\frac{4}{\log \frac{1}{\varepsilon}} \frac{\diff t}{\diff s} \frac{\diff \nabla h(x^\varepsilon)}{\diff t} {=} \nabla \widetilde{L} (x^\varepsilon) =  M {x}^\varepsilon -r + \lambda \mathbbb{1} \, .
\end{equation*}
One can also write $x_i^\varepsilon(s) = \varepsilon^{w_i^\varepsilon(s)}$. In other words, the dual variable $w^\varepsilon(s)$ encodes the scale of $x^\varepsilon(s)$ in $\varepsilon$. At initialization, $x^\varepsilon$ is of order $\varepsilon$, thus $w^\varepsilon(0) \approx \mathbbb{1}$. (We use approximate symbols to denote relations that are rigorous in a certain sense when $\varepsilon \to 0$.) As the trajectories of $x^\varepsilon(s)$ are uniformly bounded (see Sec.~\ref{sec:uniform-bound}), we must have $w^\varepsilon(s) \gtrsim 0$. Finally, when $w_i^\varepsilon(s)$ is positive, $x_i^\varepsilon(s)$ converges to $0$ as $\varepsilon \to 0$, thus we have the (approximate) complementary slackness $\langle w^\varepsilon(s), x^\varepsilon(s) \rangle \approx 0$.

We thus have the system 
\begin{align*}
	&\frac{\diff w^\varepsilon}{\diff s} = M x^\varepsilon - r + \lambda \mathbbb{1}\, , \\
	& w^\varepsilon \gtrsim 0 \, , \qquad x^\varepsilon \geq 0 \, , \qquad \langle w^\varepsilon, x^\varepsilon \rangle \approx 0 \, .
\end{align*}
Writing $z^\varepsilon(s) = \int_0^s x^\varepsilon(u) \diff u$ and integrating the first equation, this gives the so-called LCP with derivatives \cite{kaneko1978parametric}
\begin{align*}
	& w^\varepsilon = M z^\varepsilon - s r + (1+\lambda s)\mathbbb{1} \, , \\
	& w^\varepsilon \gtrsim 0 \, , \qquad \frac{\diff z^\varepsilon}{\diff s} \geq 0 \, , \qquad \left\langle w^\varepsilon, \frac{\diff z^\varepsilon}{\diff s} \right\rangle \approx 0 \, .
\end{align*}

\paragraph*{The parametric LCP and the LCP with derivatives.} We now identify $s = \mu$ to help a correspondence. If the solution of the parametric LCP is assumed to be monotone in its parameter, then it is also a solution of the LCP with derivatives \cite{kaneko1978parametric}. The monotonicity assumption of Thm.~\ref{thm:monotone-u2} corresponds to this requirement. 

Finally, if $M$ is positive definite, the solution of the LCP with derivatives is unique (up to initialization). Indeed, consider two solutions $(w_1, z_1)$ and $(w_2, z_2)$. Then 
\begin{align*}
	\frac{\diff}{\diff s} \left(\frac{1}{2} \Vert z_1(s) - z_2(s) \Vert_M^2 \right) &= \left\langle M(z_1(s) - z_2(s)), \frac{\diff z_1}{\diff s}(s) - \frac{\diff z_2}{\diff s}(s) \right\rangle \\
	&= \left\langle w_1(s) - w_2(s), \frac{\diff z_1}{\diff s}(s) - \frac{\diff z_2}{\diff s}(s) \right\rangle \\
	&= \left\langle w_1(s), \frac{\diff z_1}{\diff s}(s) \right\rangle + \left\langle w_2(s), \frac{\diff z_2}{\diff s}(s) \right\rangle \\
	&\quad - \left\langle w_1(s), \frac{\diff z_2}{\diff s}(s) \right\rangle - \left\langle w_2(s), \frac{\diff z_1}{\diff s}(s) \right\rangle \, .
\end{align*}
By the properties of the LCP with derivatives, we have
\begin{equation*}
\left\langle w_1(s), \frac{\diff z_1}{\diff s}(s) \right\rangle = \left\langle w_2(s), \frac{\diff z_2}{\diff s}(s) \right\rangle = 0 \, .
\end{equation*}
Moreover, as $w_1(s), w_2(s) \geq 0$ and $\frac{\diff z_1}{\diff s}(s), \frac{\diff z_2}{\diff s}(s) \geq 0$, we have
\begin{equation*}
- \left\langle w_1(s), \frac{\diff z_2}{\diff s}(s) \right\rangle - \left\langle w_2(s), \frac{\diff z_1}{\diff s}(s) \right\rangle \leq 0 \, .
\end{equation*}
Thus $\Vert z_1(s) - z_2(s) \Vert_M^2$ is decreasing. This proves that the solution of the LCP with derivatives is unique up to initialization. This uniqueness enables to identify the solution of the parametric LCP with the solution of the LCP with derivatives. 

\paragraph*{Conclusion.} We have sketched the connections between the main objects of the proof. This sketch should help to interpret the results of Sec.~\ref{sec:ucarre} (and consequently of Sec.~\ref{sec:uv}). The fact that the rescaled time $s$ plays the role of an inverse regularization parameter $\mu$ is particularly clear from a comparison between the parametric LCP and the LCP with derivatives. Further, the monotonicity assumption on the lasso regularization path is required to make a connection between the parametric LCP and the LCP with derivatives. Finally, we obtain a result on the averaged trajectory (and not trajectory itself) because the LCP with derivatives is obtained by integrating the DLN dynamics.

However, this proof sketch hides several technical challenges. First, $M$ is not assumed to be invertible, which breaks the uniqueness of the solution of the LCP and of the LCP with derivatives. Second, the proof sketch uses informal approximate symbols to deal with asymptotics when $\varepsilon \to 0$. Making these approximations rigorous requires significant additional work. Finally, we deal directly with the nonmonotone case of Thm.~\ref{thm:nonmonotone-u2} (which requires to control the deviation from the proof sketch above), and then deduce Thm.~\ref{thm:monotone-u2} as a corollary.

We provide the detailed and rigorous proofs below.

\subsection{Mirror flow interpretation}
\label{sec:mirror-flow}

We recall the interpretation of the DLN dynamics as a mirror flow~\cite{li2022implicit}. Eq.~\eqref{eq:flow-u2-theta} can be interpreted as a mirror flow
\begin{align}
	\label{eq:mirror-flow-2}
	&\frac{\diff \nabla h(x^\varepsilon)}{\diff t} = - \nabla \widetilde{L}(x^\varepsilon) \, , 
\end{align}
where $\widetilde{L}(x) = \ell(x) + \lambda \langle \mathbbb{1}, x \rangle$ and the mirror map 
\begin{align*}
 h(x) = \frac{1}{4}\sum_{i=1}^{d} \left(x_i \log x_i - x_i\right)
\end{align*}
is the entropy of the vector $x > 0$. 

A central tool in analyzing mirror flows is the Bregman divergence associated to the mirror map $h$. For $x, y > 0$, it is defined as 
\begin{align}
	\label{eq:bregman-divergence}
	\begin{split}
	D(x,y) &= h(x) - h(y) - \langle \nabla h(y), x - y \rangle \\
	&= \frac{1}{4}\sum_{i=1}^{d} \left( x_i \log \frac{x_i}{y_i} - x_i + y_i \right) \, .
	\end{split}
\end{align}
Note that, by convexity of the mirror map $h$, the Bregman divergence $D$ is always nonnegative. In our case, this Bregman divergence can be interpreted as a relative entropy.

The mirror variable $\nabla h(x^\varepsilon)$ has an initialization $\nabla h(x^\varepsilon(0))$ that is diverging as $\varepsilon \to 0$: 
\begin{equation*}
	\nabla h(x^\varepsilon(0)) = \frac{1}{4} \log x^\varepsilon(0) = -\frac{1}{4} \left(\log \frac{1}{\varepsilon} \right)\mathbbb{1} + O(1) \, .
\end{equation*}
For this reason, we define a rescaled mirror variable 
\begin{align*}
	w^\varepsilon(s) &= - \frac{4}{\log \frac{1}{\varepsilon}} \nabla h(x^\varepsilon(s)) = - \frac{1}{\log \frac{1}{\varepsilon}} \log x^\varepsilon(s) \, . 
\end{align*}
We have $w^\varepsilon(0) \xrightarrow[\varepsilon \to 0]{} \mathbbb{1}$. This rescaled version particularly fits the analysis in the rescaled time $s$. For instance, 
\begin{equation}
	\label{eq:derivative-w^eps}
	\frac{\diff w^\varepsilon}{\diff s} = -\frac{4}{\log \frac{1}{\varepsilon}} \frac{\diff t}{\diff s} \frac{\diff \nabla h(x^\varepsilon)}{\diff t} \underset{\text{(Eq.~\eqref{eq:mirror-flow-2})}}{=} \nabla \widetilde{L} (x^\varepsilon) = M {x}^\varepsilon -r +\lambda \mathbbb{1} \, .
\end{equation}

\subsection{Uniform bound on the trajectories}
\label{sec:uniform-bound}

The goal of this section is to prove that the trajectories $\var^{\varepsilon}(t)$ of the DLN dynamics are bounded uniformly in $t \geq 0$ and sufficiently small $\varepsilon > 0$. 

\begin{proposition}
	\label{prop:uniform-bound}
	There exists $C = C(d, M, r, \lambda, \alpha) > 0$ and $\varepsilon_0= \varepsilon_0(\alpha) > 0$ such that 
	\begin{equation*}
		\forall \varepsilon \in (0, \varepsilon_0), \forall t \geq 0, \quad \Vert \var^\varepsilon(t) \Vert \leq C \, .
	\end{equation*}
\end{proposition}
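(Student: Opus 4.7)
My plan splits the bound into two parts: one controlling $P_{\operatorname{Range} M}(x^\varepsilon(t))$ via a Lyapunov argument on $\ell$, and one controlling the $\ker M$ component via the mirror-variable structure. For the first, direct calculation from Eq.~\eqref{eq:flow-u2-theta} together with $x^\varepsilon \geq 0$ yields
\[
\frac{\diff \ell(x^\varepsilon)}{\diff t}
= \langle \nabla \ell(x^\varepsilon), -4 x^\varepsilon \circ \nabla \ell(x^\varepsilon) \rangle
= -4 \sum_{i=1}^{d} x^\varepsilon_i \bigl(\nabla \ell(x^\varepsilon)_i\bigr)^2 \leq 0 \, .
\]
Hence $\ell(x^\varepsilon(t)) \leq \ell(x^\varepsilon(0)) = O(\varepsilon)$, which is $\leq 1$ for $\varepsilon$ small. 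Fix $x^\sharp$ with $M x^\sharp = r$ (possible because $r \in \operatorname{Range} M$); then $\ell(y) - \ell_* = \tfrac{1}{2} \|y - x^\sharp\|_M^2$, so the loss bound gives $\|M^{1/2}(x^\varepsilon(t) - x^\sharp)\| \leq C_0$, equivalently $\|P_{\operatorname{Range} M}(x^\varepsilon(t))\| \leq C_1(M, r)$ uniformly in $t$.

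For the second part, I would use the identities $x^\varepsilon_i(s) = \varepsilon^{w^\varepsilon_i(s)}$, $w^\varepsilon(0) \to \mathbbb{1}$, and $\frac{\diff w^\varepsilon}{\diff s} = M x^\varepsilon - r$ from Sec.~\ref{sec:mirror-flow}. A uniform bound $\|x^\varepsilon(t)\| \leq C$ is equivalent to a coordinate-wise lower bound $w^\varepsilon(s) \geq -\log(C)/\log(1/\varepsilon)$. The crucial observation is that $M x^\varepsilon = M P_{\operatorname{Range} M}(x^\varepsilon)$, since $M$ is symmetric with $\ker M \perp \operatorname{Range} M$, so by the first part $\|M x^\varepsilon(s) - r\|$ is uniformly bounded, and hence so is $\|\diff w^\varepsilon / \diff s\|$. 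Integrating gives $w^\varepsilon(s) \geq w^\varepsilon(0) - C_2 s$, which already handles any bounded $s$-interval.

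The main obstacle is extending this bound to all $s$: the crude estimate above degenerates as $s \to \infty$, so a self-correcting argument is needed. Concretely, if $w^\varepsilon_i(s_0) = -\delta$ for some $\delta > 0$ at a first such time $s_0$, then $\frac{\diff w^\varepsilon_i}{\diff s}(s_0) \leq 0$, i.e. $(M x^\varepsilon(s_0))_i \leq r_i$, while $x^\varepsilon_i(s_0) = \varepsilon^{-\delta}$ is very large. Without loss of generality $M_{ii} > 0$ (if $M_{ii} = 0$ then $M \succeq 0$ forces the entire $i$-th row of $M$ to vanish, whence $r_i = 0$ and $x^\varepsilon_i$ is constant equal to $\varepsilon \alpha_i^2$). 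The contribution $M_{ii} x^\varepsilon_i(s_0) = M_{ii} \varepsilon^{-\delta}$ must then be nearly cancelled by the off-diagonal terms $\sum_{j \neq i} M_{ij} x^\varepsilon_j(s_0)$, which is only compatible with the uniform boundedness of $P_{\operatorname{Range} M}(x^\varepsilon)$ through a precise interplay between the kernel and range decompositions of $x^\varepsilon$. Turning this observation into a quantitative contradiction and iterating it coordinate by coordinate is the technical heart of the proof, and I expect it to parallel the structural reductions used in the analysis of parametric linear complementarity problems to be introduced in Sec.~\ref{sec:param-LCP}.
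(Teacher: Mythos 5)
Your first step reproduces the paper's Lemma~\ref{lem:bound-image-theta}: the monotonicity of $t \mapsto \ell(\var^\varepsilon(t))$ along the gradient flow yields a uniform bound on $M\var^\varepsilon(t)$, equivalently on the $\Span M$ component of the trajectory. That part is correct. The gap is the entire second half: controlling the $\ker M$ component uniformly in time is precisely what makes the proposition nontrivial beyond the positive definite case, and you have not established it. Your first-crossing argument only yields, at the first time $s_0$ where $w^\varepsilon_i$ reaches $-\delta$, that $(M\var^\varepsilon(s_0))_i \leq r_i$ while $\var^\varepsilon_i(s_0)=\varepsilon^{-\delta}$ is huge; as you yourself note, this forces a near-cancellation $M_{ii}\var^\varepsilon_i \approx -\sum_{j\neq i}M_{ij}\var^\varepsilon_j$ involving other large coordinates, and that coordinated blow-up is exactly the scenario that must be excluded. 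You offer no mechanism to exclude it, and the pointer to the parametric LCP machinery of Sec.~\ref{sec:param-LCP} is not the right tool: that machinery concerns the lasso path $x(s)$, not the trajectory $\var^\varepsilon$.

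The paper closes this gap by a different, static argument. Integrating the mirror flow shows $\nabla h(\var^\varepsilon(t)) - \nabla h(\var^\varepsilon(0)) \in \Span M$, which characterizes $\var^\varepsilon(t)$ as the Bregman projection $\argmin\{D(\var,\var^\varepsilon(0)) : \var\geq 0,\ M\var = M\var^\varepsilon(t)\}$ (Lemma~\ref{lem:characterization-theta-from-image}). It then suffices to bound this minimizer in terms of $\Vert M\var^\varepsilon(t)\Vert$ alone, uniformly over all admissible right-hand sides. The key input is Carath\'eodory's theorem for cones: any $y=Mu$ with $u\geq 0$ admits a nonnegative preimage supported on a linearly independent subfamily of columns of $M$, hence of norm at most $C(M)\Vert y\Vert$ (Lemma~\ref{lem:bound-argmin-norm}); comparing $D(\cdot,\var^\varepsilon(0))$ with the norm, up to $\log\frac{1}{\varepsilon}$ factors that cancel on both sides, transfers this bound to the Bregman minimizer (Lemma~\ref{lem:bound-argmin-divergence}). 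To salvage your dynamical approach you would need some such quantitative statement about nonnegative solutions of $M\var=y$; without it the argument does not close.
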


To prove this result, we first note that as the DLN dynamics are a gradient flow, the function values must be nonincreasing along trajectories. 

\begin{lem}
	\label{lem:bound-value-L-tilde}
	For all $\varepsilon > 0$, $t \mapsto \widetilde{L}(\var^{\varepsilon}(t))$ is nonincreasing. As a consequence, there exists $C = C(M, r,\lambda, \alpha)> 0$ such that for all $0< \varepsilon \leq 1$, for all $t \geq 0$, $\widetilde{L}(\var^\varepsilon(t)) \leq C$. 
\end{lem}

\begin{proof}
	Using Eq.~\eqref{eq:flow-u2-theta}, 
	\begin{align*}
		\frac{\diff}{\diff t} \left(\widetilde{L}(\var^{\varepsilon})\right) = \sum_{i=1}^{d} \partial_i \widetilde{L}(\var^{\varepsilon}) \frac{\diff \var_i^\varepsilon}{\diff t} = -4 \sum_{i=1}^{d} \var^\varepsilon_i \left(\partial_i \widetilde{L}(\var^{\varepsilon})\right)^2 \leq 0 \, .
	\end{align*}
	This proves that $t \mapsto \widetilde{L}(\var^{\varepsilon}(t))$ is nonincreasing. 

	Moreover, the continuous function $\widetilde{L}$ is bounded by a constant $C = C(M,r,\lambda, \alpha)$ on the compact interval $\left\{\kappa \alpha^2, \, \kappa \in [0,1]\right\} \subset \R^d$. As a consequence, for all $0 < \varepsilon \leq 1$, for all $t \geq 0$,
	\begin{align*}
		\widetilde{L}(\var^{\varepsilon}(t)) \leq \widetilde{L}(\var^{\varepsilon}(0)) = \widetilde{L}(\varepsilon \alpha^2) \leq C \, . 
	\end{align*} 
\end{proof}
If $\lambda > 0$ or $M$ is positive definite, the function $\widetilde{L}$ is coercive on the set of nonnegative vectors, thus Lemma~\ref{lem:bound-value-L-tilde} implies Prop.~\ref{prop:uniform-bound}. However, the proof when $\lambda = 0$ and $M$ is not positive definite requires more work. We now assume that we are in this case. Lemma~\ref{lem:bound-value-L-tilde} only implies a bound on $Mx^\varepsilon(t)$. 
\begin{lem}
	\label{lem:bound-image-M}
	There exists $C = C(M,r,\alpha) > 0$ such that for all $0 < \varepsilon \leq 1$, for all $t \geq 0$, $\Vert M\var^\varepsilon(t) \Vert \leq C$.
\end{lem}
\begin{proof}
	Let $\var_* = \var_*(r,M)$ be the minimum norm minimizer of $\ell$. Then for any $\var \in \R^d$, 
	\begin{align*}
		\Vert M\var \Vert^2 &\leq \Vert M \Vert \Vert M^{1/2}\var \Vert^2 \leq 2\Vert M \Vert \left( \Vert M^{1/2}\var_* \Vert^2 + \Vert M^{1/2}(\var - \var_*) \Vert^2 \right) \\
		&= 2\Vert M \Vert \left( \Vert M^{1/2}\var_* \Vert^2 + 2\ell(\var) - 2\ell(\var_*) \right) 
		\\
		&= 2\Vert M \Vert \left( \Vert M^{1/2}\var_* \Vert^2 + 2\widetilde{L}(\var) - 2\ell(\var_*) \right) \, . 
	\end{align*}
	By Lemma~\ref{lem:bound-value-L-tilde}, there exists $C_1 = C_1(M,r,\alpha) > 0$ such that for all $0 < \varepsilon \leq 1$, for all $t \geq 0$, $\widetilde{L}(\var^\varepsilon(t)) \leq C_1$.
	Thus the lemma holds with
	\begin{equation*}
		C(M,r,\alpha) = 2\Vert M \Vert \left( \Vert M^{1/2}\var_* \Vert^2 + 2C_1(M,r,\alpha) - 2\ell(\var_*) \right) \, .
	\end{equation*}
\end{proof}

Our strategy consists in characterizing the flow $\var^\varepsilon(t)$ from its image $M \var^\varepsilon(t)$, and, as a consequence, showing that $\var^\varepsilon(t)$ is bounded by a bound on its image $M \var^\varepsilon(t)$. These arguments are the two lemmas below.

\begin{lem}
	\label{lem:characterization-theta-from-image}
	For all $t \geq 0$, 
	\begin{equation}
		\label{eq:characterization-theta-from-image}
		\var^\varepsilon(t) = \argmin_{\var \geq 0, \, M\var = M\var^\varepsilon(t)} D(\var, \var^\varepsilon(0)) \, ,
	\end{equation}
	where $D(\var, y)$ is defined in Eq.~\eqref{eq:bregman-divergence}. Note that we extend $D(\var, y)$ to coordinates $\var_i = 0$ by setting by continuity $0\log 0 = 0$. 
\end{lem}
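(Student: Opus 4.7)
The plan is to identify Eq.~\eqref{eq:characterization-theta-from-image} as the first-order optimality characterization of a strictly convex program, and to verify the single nontrivial KKT condition by integrating the mirror flow equation~\eqref{eq:mirror-flow-2}.

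First I would record two structural facts. The flow~\eqref{eq:flow-u2-theta} preserves the open positive orthant coordinate-wise---its right-hand side vanishes whenever $\var_i = 0$, and by assumption $\alpha_i \neq 0$ implies $\var^\varepsilon(0) = \varepsilon \alpha^2 > 0$---so $\var^\varepsilon(t) > 0$ for every $t \geq 0$. Simultaneously, $\var \mapsto D(\var, \var^\varepsilon(0))$ is strictly convex, coercive on $\{\var \geq 0\}$ (since $\var_i \log \var_i - \var_i \to +\infty$), and the feasible set contains $\var^\varepsilon(t)$ itself, hence the minimization problem on the right-hand side of Eq.~\eqref{eq:characterization-theta-from-image} admits a unique minimizer and is fully characterized by KKT. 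Since the candidate $\var^\varepsilon(t)$ lies in the interior of $\{\var \geq 0\}$, the nonnegativity multipliers vanish and the KKT system collapses to the single linear condition
\[
\nabla h(\var^\varepsilon(t)) - \nabla h(\var^\varepsilon(0)) \in \Span M,
\]
which expresses orthogonality of $\nabla_\var D(\var, \var^\varepsilon(0))\big|_{\var=\var^\varepsilon(t)}$ to $\ker M$, the tangent space of the affine constraint $M \var = M \var^\varepsilon(t)$.

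To verify this condition, I would integrate the mirror flow~\eqref{eq:mirror-flow-2} from $0$ to $t$, using $\nabla \ell(\var) = -r + M \var$, to obtain
\[
\nabla h(\var^\varepsilon(t)) - \nabla h(\var^\varepsilon(0)) = t \, r \; - \; M \int_0^t \var^\varepsilon(u) \, \diff u.
\]
The standing assumption $r \in \Span M$ places both terms on the right-hand side in $\Span M$, so the stationarity condition holds; by strict convexity of $D(\,\cdot\,, \var^\varepsilon(0))$, this identifies $\var^\varepsilon(t)$ as the unique minimizer.

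The only subtle point is ensuring that the candidate $\var^\varepsilon(t)$ is strictly positive, so that the nonnegativity constraints can be ignored and KKT reduces to the single affine condition above. This is precisely where the hypothesis $\alpha_i \neq 0$ is used, and explains why the boundary extension $0 \log 0 = 0$ in the statement of the lemma is cosmetic: it is needed only to make the minimization problem well-posed over the closed orthant, whereas the actual minimizer sits in the interior.
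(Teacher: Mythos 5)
Your proposal is correct and follows essentially the same route as the paper: both identify the right-hand side of Eq.~\eqref{eq:characterization-theta-from-image} as a strictly convex program whose optimality condition at the interior point $\var^\varepsilon(t)>0$ reduces to $\nabla h(\var^\varepsilon(t)) - \nabla h(\var^\varepsilon(0)) \in \Span M$, and both verify this by integrating the mirror flow~\eqref{eq:mirror-flow-2} and invoking the standing assumption $r \in \Span M$. Your added remarks on positivity of the trajectory and on why the $0\log 0$ extension is cosmetic are accurate elaborations of points the paper leaves implicit.
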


\begin{proof}
A point $\var>0$ is a minimizer of $\mindot_{\var \geq 0, \, M\var = M\var^\varepsilon(t)} D(\var, \var^\varepsilon(0))$ if and only if $M \var = M \var^\varepsilon(t)$ and $\nabla_\var D(\var, \var^\varepsilon(0)) \in \Span M$. For $\var = \var^\varepsilon(t) > 0$, only the second condition needs to be checked. As $\nabla_\var D(\var, y) = \nabla h (x) - \nabla h(y)$ and by Eq.~\eqref{eq:mirror-flow-2}, we have 
\begin{align*}
	\nabla_\var D(\var^\varepsilon(t), \var^\varepsilon(0)) &= \nabla h(\var^\varepsilon(t)) - \nabla h(\var^\varepsilon(0)) \\
	&= \int_{0}^{t} \left(r - M \var^{\varepsilon}(t') \right)\diff t' \in \Span M \, . 
\end{align*}
Moreover, $\var^\varepsilon(t)$ is the unique minimizer as $D(.,.)$ is marginally strictly convex in its first variable. This justifies using the notation $\argmin$. This concludes the proof.
\end{proof}

\begin{lem}
	\label{lem:bound-argmin-divergence}
	There exists $\varepsilon_0 = \varepsilon_0(\alpha) > 0$ and $C = C(d, M) > 0$ such that for all $0 < \varepsilon \leq \varepsilon_0$, for all $y = Mu$ for some $u \in \R^d$, $u \geq  0$, denoting 
	\begin{align}
		\label{eq:argmin-bound-argmin-divergence}
		x(y) = \argmin_{x \geq 0, \,  Mx = y} D(x, \var^\varepsilon(0)) \, , 
	\end{align}
	we have 
	\begin{equation*}
		\Vert x(y) \Vert \leq C (1 + \Vert y \Vert^2) \, . 
	\end{equation*}
\end{lem}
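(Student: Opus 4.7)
My strategy is to exploit the minimality of $x(y)$ as a Bregman projection by comparing $D(x(y), \var^\varepsilon(0))$ to $D(u, \var^\varepsilon(0))$ for a conveniently bounded preimage $u \geq 0$ of $y$ under $M$. First I would invoke standard polyhedral theory (Hoffman's lemma, or a Carath\'eodory-style reduction to the extreme rays of the polyhedral cone $\{Mu : u \geq 0\}$) to produce a constant $C_H = C_H(M) > 0$ and a point $u \geq 0$ with $Mu = y$ and $\Vert u \Vert \leq C_H \Vert y \Vert$; crucially, $C_H$ depends only on $M$. Since $x(y)$ is the unique minimizer in~\eqref{eq:argmin-bound-argmin-divergence} and $u$ is feasible, this gives $D(x(y), \var^\varepsilon(0)) \leq D(u, \var^\varepsilon(0))$.

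Next I would extract the $\log \tfrac{1}{\varepsilon}$-scaling of $D$. Substituting $\var^\varepsilon_i(0) = \varepsilon \alpha_i^2$ into~\eqref{eq:bregman-divergence} yields the decomposition
\begin{equation*}
	D(x, \var^\varepsilon(0)) = \tfrac{1}{4} \log \tfrac{1}{\varepsilon} \cdot \Vert x \Vert_1 + \tfrac{1}{4} \sum_{i=1}^d \phi_i(x_i) + \tfrac{\varepsilon}{4} \Vert \alpha \Vert^2 \, ,
\end{equation*}
with $\phi_i(z) = z \log z - 2 z \log |\alpha_i| - z$ (and the convention $\phi_i(0) = 0$). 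A one-variable computation shows that $\phi_i$ is minimized at $z = \alpha_i^2$ with value $-\alpha_i^2$, so $\sum_i \phi_i(x_i) \geq -\Vert \alpha \Vert^2$ for every $x \geq 0$. Applied to $x = x(y)$ and rearranged, this produces the key bound
\begin{equation*}
	\Vert x(y) \Vert_1 \leq \frac{4 D(u, \var^\varepsilon(0)) + \Vert \alpha \Vert^2}{\log \tfrac{1}{\varepsilon}} \, .
\end{equation*}
To upper bound the right-hand side, I would use the inequalities $u_i \log u_i \leq u_i \log \max(1, \Vert u \Vert) \leq u_i \Vert u \Vert$ (via $\log z \leq z$) and $-2 u_i \log |\alpha_i| \leq 2 u_i \max_j |\log |\alpha_j||$, together with $\Vert u \Vert_1 \leq \sqrt{d} \Vert u \Vert \leq \sqrt{d} C_H \Vert y \Vert$, to bound $\sum_i \phi_i(u_i) \leq \sqrt{d} C_H^2 \Vert y \Vert^2 + C(\alpha, d) \Vert y \Vert$.

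Combining the two bounds gives $\Vert x(y) \Vert_1 \leq \sqrt{d} C_H \Vert y \Vert + \bigl[\sqrt{d} C_H^2 \Vert y \Vert^2 + C(\alpha, d) \Vert y \Vert + \Vert \alpha \Vert^2 + \varepsilon \Vert \alpha \Vert^2\bigr] / \log \tfrac{1}{\varepsilon}$. Choosing $\varepsilon_0 = \varepsilon_0(\alpha)$ small enough that $\log \tfrac{1}{\varepsilon_0} \geq \max(1, \Vert \alpha \Vert^2, 2 \max_j |\log |\alpha_j||)$ absorbs the $\alpha$-dependent factors into $\alpha$-free constants and makes the $\log \tfrac{1}{\varepsilon}$ denominator in front of $\Vert y \Vert^2$ at least $1$; combining with $\Vert y \Vert \leq \tfrac{1}{2}(1 + \Vert y \Vert^2)$ and $\Vert x(y) \Vert \leq \Vert x(y) \Vert_1$ then yields the claimed $\Vert x(y) \Vert \leq C(d, M)(1 + \Vert y \Vert^2)$. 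I expect the main technical obstacle to be the emergence of the $\Vert y \Vert \log \Vert y \Vert$ term from $u_i \log u_i$: absorbing it via $\log z \leq z$ is precisely what forces the right-hand side to be quadratic, rather than linear, in $\Vert y \Vert$. A secondary point worth checking carefully is that Hoffman's constant $C_H$ depends only on $M$, which is what lets the final constant $C$ be free of any $\alpha$-dependence.
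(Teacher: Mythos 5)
Your proposal is correct and follows essentially the same route as the paper: a Carath\'eodory-type argument (the paper's Lemma~\ref{lem:bound-argmin-norm}) producing a nonnegative preimage $u$ of $y$ with $\Vert u \Vert \leq C(M) \Vert y \Vert$, the comparison $D(x(y), \var^\varepsilon(0)) \leq D(u, \var^\varepsilon(0))$ from minimality, and the extraction of the $\log \tfrac{1}{\varepsilon}$ scaling of the Bregman divergence, with the quadratic dependence on $\Vert y \Vert$ arising from bounding $u_i \log u_i$ via $\log z \leq z$ exactly as in the paper's Eq.~\eqref{eq:aux-1}. The only difference is presentational (you cancel the $\Vert \cdot \Vert_1$ terms directly rather than stating two-sided bounds between $D$ and the norm), which does not change the argument.
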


Before we prove this last lemma, let us combine the above lemmas to prove Prop.~\ref{prop:uniform-bound}.

\begin{proof}[Proof of Proposition \ref{prop:uniform-bound}]
By Lemma~\ref{lem:characterization-theta-from-image}, 
	\begin{equation}
	\var^\varepsilon(t) = \argmin_{\var > 0, \, M\var = M\var^\varepsilon(t)} D(\var, \var^\varepsilon(0)) \, .
\end{equation}
Using Lemma~\ref{lem:bound-argmin-divergence} and then Lemma~\ref{lem:bound-image-M}, there exists $\varepsilon_0 = \varepsilon_0(\alpha) > 0$ such that for all $0 < \varepsilon < \varepsilon_0$,
\begin{equation*}
	\Vert \var^\varepsilon(t) \Vert \leq C(d,M) (1+\Vert M \var^\varepsilon(t) \Vert^2) \leq C(d,M, r, \alpha)  \, .
\end{equation*}
\end{proof}

We are now only left with the proof of Lemma~\ref{lem:bound-argmin-divergence} to finish this section. This result requires a Caratheodory theorem on cones. Let $a_i$, $i \in I$, denote a finite family of vectors in $\R^d$. The cone generated by $(a_i)_{i \in I}$ is defined as
	\begin{equation*}
		\Cone(a_i, i\in I) = \left\{ \sum_{i \in I} \lambda_i a_i \, \middle\vert \, \lambda_i \geq 0 , i \in I\right\} \, .
	\end{equation*}

\begin{theorem}[{\cite{caratheodory1911variabilitatsbereich}}]
	Let $a_1, \dots, a_k \in \R^d$ and $y \in \Cone(a_1, \dots, a_k)$. 
	
	There exists a linearly independent subfamily $a_i$, $i \in I$, $I \subset \{1, \dots, k\}$ such that $y \in \Cone(a_i, i\in I)$. 

	In particular, $y$ is in the cone generated by a subfamily of at most $d$ vectors.
\end{theorem}

To be precise, Carathéodory's theorem usually refers to the last statement. However, below, we are interested in the linear independence statement, which follows easily from the same proof. 

\begin{proof}
	As $y \in \Cone(a_1, \dots, a_k)$, $y$ can be decomposed as $y = \sum_{i\in I} \lambda_i a_i$ with $I \subset \{1, \dots, k\}$ and $\lambda_i \geq 0$, $i \in I$. Consider such a decomposition with $I$ of minimal cardinality. We will show that in this case, the $a_i$, $i \in I$ are linearly independent.

	By contradiction, assume that there exists $\mu_i$, $i \in I$, not all equal to $0$, such that $\sum_{i\in I} \mu_i a_i = 0$. Then for all $t$, we have that $y = \sum_{i \in I} (\lambda_i - t \mu_i) a_i$. We will use this degree of freedom in $t$ to remove an element from $I$, and thus contradict the minimality of~$I$.
	
	Without loss of generality, we can assume that there exists a positive $\mu_i$. (We can consider $-\mu$ instead of $\mu$.) Consider 
	\begin{equation*}
		\tau = \inf \left\{ t \geq 0 \, \middle\vert \, \lambda_i - t \mu_i = 0 \text{ for some }i \in I\right\} \, .
	\end{equation*}
	Note that $\tau < \infty$ as there exists $i \in I$ such that $\mu_i > 0$. Then for all $i \in I$, $\lambda_i - \tau \mu_i \geq 0$, with equality for at least one $i \in I$, that we denote $i_0$. Then $y = \sum_{i \in I\backslash\{i_0\}} (\lambda_i - \tau \mu_i) a_i$. Thus $I$ is not minimal, which gives a contradiction.
\end{proof}

We now use Caratheodory's theorem to show a central lemma. Let us give first the motivation. Consider a feasible linear system $Ax = y$. The minimum norm solution $x = A^\dagger y$ is bounded in norm by $\Vert A^{\dagger} \Vert \Vert y \Vert$. We now consider the same question for the system $Ax = y$ with the additional constraint $x \geq 0$ (assuming there exists at least one such solution). How can the minimum norm solution be bounded? Carathéodory's theorem enables to prove that, the non-negative minimum norm solution is, again, bounded by a constant proportional to the norm of $y$.

\begin{lem}
	\label{lem:bound-argmin-norm}
	Let $A \in \R^{d \times k}$. There exists $C = C(A) > 0$ such that for all $y = Au$ for some $u \in \R^k$, $u \geq 0$, denoting \begin{align}
		\label{eq:argmin-bound-argmin-norm}
		x(y) = \argmin_{x \geq 0, Ax = y} \Vert x \Vert \, , 
	\end{align}
	we have
	\begin{equation*}
		\Vert x(y) \Vert \leq C \Vert y \Vert \, .
	\end{equation*}
\end{lem}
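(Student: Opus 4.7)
The plan is to apply Carathéodory's theorem just stated, which reduces the question to bounding solutions of a linear system where the restricted matrix has linearly independent columns; in that regime we can simply use the Moore--Penrose pseudo-inverse and then exploit that there are only finitely many possible column subsets.

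In more detail, I would let $a_1, \dots, a_k$ be the columns of $A$. Because $y = Au$ with $u \geq 0$, we have $y \in \Cone(a_1, \dots, a_k)$. By Carathéodory's theorem as stated above, there exists a subset $I \subset \{1, \dots, k\}$ such that the family $(a_i)_{i \in I}$ is linearly independent and $y \in \Cone(a_i, i \in I)$. Write $y = \sum_{i \in I} \mu_i a_i = A_I \mu_I$ with $\mu_I \geq 0$. Since $A_I$ has full column rank, $A_I^{\dagger} A_I = \Id$, hence $\mu_I = A_I^{\dagger} y$ and
\begin{equation*}
	\Vert \mu_I \Vert \leq \Vert A_I^{\dagger} \Vert \, \Vert y \Vert \, .
\end{equation*}
Extending $\mu_I$ by zeros outside $I$ produces a vector $\mu \in \R^k$ satisfying $\mu \geq 0$, $A\mu = y$, and $\Vert \mu \Vert = \Vert \mu_I \Vert$. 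Since $x(y)$ is the minimum-norm feasible point, $\Vert x(y) \Vert \leq \Vert \mu \Vert \leq \Vert A_I^{\dagger} \Vert \, \Vert y \Vert$.

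The subset $I$ a priori depends on $y$, but it belongs to the finite collection of subsets of $\{1, \dots, k\}$ for which $(a_i)_{i \in I}$ is linearly independent. I would therefore set
\begin{equation*}
	C = \max_{I \subset \{1, \dots, k\}, \, (a_i)_{i \in I} \text{ lin. indep.}} \Vert A_I^{\dagger} \Vert \, ,
\end{equation*}
which depends only on $A$ and is finite, and conclude $\Vert x(y) \Vert \leq C \Vert y \Vert$.

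There is no real obstacle: the only subtlety is the dependence of $I$ on $y$, which is handled by taking the maximum over the finite combinatorial family of admissible index sets. The linear independence statement in the version of Carathéodory's theorem proved just above is exactly what guarantees that $A_I^{\dagger}$ acts as a genuine left inverse of $A_I$, making the pseudo-inverse bound apply.
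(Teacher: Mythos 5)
Your proof is correct and follows essentially the same route as the paper's: apply the linear-independence form of Carath\'eodory's theorem, recover the coefficients via $A_I^{\dagger}$, extend by zeros to get a feasible point, and take the maximum of $\Vert A_I^{\dagger} \Vert$ over the finitely many admissible index sets. The only (immaterial) difference is that the paper takes the maximum over all subsets $I$ rather than only the linearly independent ones, and it additionally spells out why the $\argmin$ in \eqref{eq:argmin-bound-argmin-norm} is well-defined.
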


\begin{proof}
	First, we note that $\Vert x \Vert$ diverges as $\Vert x \Vert \to \infty$, $x \geq 0, Ax = y$. As a consequence, the minimizer in Eq.~\eqref{eq:argmin-bound-argmin-divergence} is well-defined as the unique minimizer of a strictly convex function on a compact set. 

	In this proof, we show that the lemma holds with $C := \max_{I \subset \{1, \dots, k\}} \Vert (A_I)^\dagger \Vert$. Denote $a_1, \dots, a_k$ the columns of $A$. Let $y = Au$ for some $u \geq 0$. Then $y \in \Cone(a_1, \dots, a_k)$. By Caratheodory's theorem, there exists a linearly independent subfamily $a_i$, $i \in I$, such that $y \in \Cone(a_i, i\in I)$, i.e., $y = A_I v$ for some $v \in \R^I$, $v \geq 0$. The matrix $A_I$ has full column rank, thus $v = (A_I)^\dagger y$. Consider $x \in \R^k$ such that $x_i = v_i$ if $i \in I$, and $x_i = 0$ otherwise. Then $Ax = y$ and $x \geq 0$. Thus
	\begin{equation*}
		\Vert x(y) \Vert \leq \Vert x \Vert = \Vert v \Vert = \Vert (A_I)^\dagger y \Vert \leq \Vert (A_I)^\dagger \Vert \Vert y \Vert \leq C \Vert y \Vert \, . 
	\end{equation*}
	This completes the proof.
\end{proof}

 We are now ready to prove Lemma~\ref{lem:bound-argmin-divergence}.

\begin{proof}[Proof of Lemma~\ref{lem:bound-argmin-divergence}]
	First, we note that $D(x, x^\varepsilon(0))$ diverges as $\Vert x \Vert \to \infty$, $x \geq 0$. As a consequence, the minimizer in Eq.~\eqref{eq:argmin-bound-argmin-divergence} is well-defined as the unique minimizer of a strictly convex function on a compact set. 

	The only difference between Lemma~\ref{lem:bound-argmin-divergence} and Lemma~\ref{lem:bound-argmin-norm} is that the former minimizes $D(\var, \var^\varepsilon(0))$ while the latter minimizes $\Vert \var \Vert$. However, the two quantities are comparable in the sense that there exists $0<\varepsilon_0 = \varepsilon_0(\alpha)<1$, $C_1 = C_1(d)$, $C_2 = C_2(d) > 0$ such that for all $0 < \varepsilon \leq \varepsilon_0$, for all $\var \geq 0$, 
	\begin{equation}
		\label{eq:aux-1}
		\frac{1}{8} \left(\log \frac{1}{\varepsilon}\right)\Vert \var \Vert - C_1 \leq D(\var, \var^\varepsilon(0)) \leq   C_2 \left(\log \frac{1}{\varepsilon}\right) \Vert \var \Vert + \frac{1}{4}\Vert \var \Vert^2 + 1 \, . 
	\end{equation}
	Indeed, 
	\begin{align*}
		D(\var, \var^\varepsilon(0)) &= \frac{1}{4}\sum_{i=1}^{d} \left(\var_i \log \frac{\var_i}{\varepsilon\alpha_i^2} - \var_i + \varepsilon \alpha_i^2\right) \\
		&= \frac{1}{4}\sum_{i=1}^{d} \var_i \log \var_i + \frac{1}{4}\left(\log \frac{1}{\varepsilon} - 1\right) \Vert \var \Vert_1 + \frac{1}{4} \sum_{i=1}^{d} x_i \log \frac{1}{\alpha_i^2} + \frac{\varepsilon}{4} \sum_{i=1}^{d} \alpha_i^2\, .
	\end{align*}
	For the lower-bound, we use that $u \mapsto u\log u$ is bounded from below on $\R_{\geq 0}$. Thus there exists $0<\varepsilon_1 = \varepsilon_1(\alpha)<1$ such that, for all $\varepsilon < \varepsilon_1$, 
	\begin{align*}
		D(\var, \var^\varepsilon(0)) \geq -C_1(d) + \frac{1}{8} \left(\log \frac{1}{\varepsilon}\right)\Vert \var \Vert \, . 
	\end{align*}
	For the upper-bound, we have
	\begin{align*}
		D(\var, \var^\varepsilon(0)) &= \frac{1}{4}\sum_{i=1}^{d} \left(\var_i \log \frac{\var_i}{\varepsilon\alpha_i^2} - \var_i + \varepsilon \alpha_i^2\right) \\
		&\leq \frac{1}{4}\sum_{i=1}^{d} \var_i \left(\log \var_i + \log \frac{1}{\varepsilon} + \log \frac{1}{\alpha_i^2}\right) + \frac{\varepsilon}{4} \sum_{i=1}^{d} \alpha_i^2\, .
	\end{align*}
	We use $\log y \leq y$ for $y > 0$. Thus there exists $0<\varepsilon_2 = \varepsilon_2(\alpha)<1$, $C_2 = C_2(d)$ such that, for all $\varepsilon < \varepsilon_2$,
	\begin{align*}
		D(\var, \var^\varepsilon(0)) &\leq \frac{1}{4}\Vert \var \Vert^2 +\frac{1}{2}  \left(\log \frac{1}{\varepsilon}\right) \sum_{i=1}^{d} \var_i + 1 \leq \frac{1}{4}\Vert \var \Vert^2 + C_2(d) \left(\log \frac{1}{\varepsilon}\right) \Vert \var \Vert + 1 \, .
	\end{align*}
Thus Eq.~\eqref{eq:aux-1} holds for $\varepsilon \leq \varepsilon_0(\alpha) := \min(\varepsilon_1(\alpha), \varepsilon_2(\alpha))$. 

We now define $\beta(y) = \argmin_{\beta \geq 0, M\beta = y} \Vert \beta \Vert$. By Lemma~\ref{lem:bound-argmin-norm}, $\Vert \beta(y) \Vert \leq C_3(M) \Vert y \Vert$. Then we have, 
\begin{align*}
	\frac{1}{8} \left(\log \frac{1}{\varepsilon}\right)\Vert x(y) \Vert &\leq C_1(d) + D(\var(y), \var^\varepsilon(0)) \qquad \text{by Eq.~\eqref{eq:aux-1}}\\
	&\leq C_1(d) + D(\beta(y), \var^\varepsilon(0)) \qquad \text{by the variational definition of $x(y)$}\\
	&\leq C_4(d) + C_2(d) \left(\log \frac{1}{\varepsilon}\right) \Vert \beta(y) \Vert + \frac{1}{4}\Vert \beta(y) \Vert^2  \qquad \text{by Eq.~\eqref{eq:aux-1}}\\
	&\leq C_4(d) + C_5(d,M) \left(\log \frac{1}{\varepsilon}\right) \Vert y \Vert + C_6(d,M) \Vert y \Vert^2  \, . 
\end{align*}
Assuming that $\varepsilon_0$ is small enough so that $\frac{1}{8}\log \frac{1}{\varepsilon_0} \geq 1$, we have 
\begin{align*}
	\Vert x(y) \Vert &\leq  C_4(d) + 8C_5(d,M)  \Vert y \Vert + C_6(d,M) \Vert y \Vert^2 \leq C(d,M) (1 + \Vert y \Vert^2) \, . 
\end{align*}

\end{proof}

\subsection{The positive lasso and linear complementarity problems}
\label{subsubsec:LCP}

The positive lasso~\eqref{eq:positive-lasso} is in fact a constrained quadratic optimization problem: $\mindot_{x \geq 0} \frac{1}{2} \langle x, M x \rangle - \langle r, x \rangle + \left(\lambda+\frac{1}{\mu}\right) \langle \mathbbb{1}, x \rangle$. As a consequence, its primal-dual formulation is a linear complementarity problem \cite{cottle2009linear}.

\begin{proposition}
	\label{prop:lcp-optim}
    Let $x \in \R^d$. The vector $x$ is a minimizer of the optimization problem~\eqref{eq:positive-lasso} if and only if there exists $v \in \R^d$ such that $(v,x)$ is a solution of
	\begin{align}
		\label{eq:first_LCP}
		\begin{split}
			v &= -r + \left(\lambda+\frac{1}{\mu}\right) \mathbbb{1} + M x \, , \\
			v & \geq 0 \, , \ x \geq 0\, , \  \langle v, x \rangle = 0 ~.
		\end{split}
		\end{align}
		The system above has the form of a \emph{linear complementarity problem} (LCP) \cite{cottle2009linear}. 
\end{proposition}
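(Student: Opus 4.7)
The statement is the standard KKT characterization of the positive lasso minimizers, recast in LCP language, so the plan is to verify that the KKT conditions for problem \eqref{eq:positive-lasso} coincide verbatim with system \eqref{eq:first_LCP}.

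First, I would observe that the objective $f(x) = \frac{1}{2}\langle x, Mx\rangle - \langle r, x\rangle + \lambda \langle \mathbbb{1}, x\rangle$ is convex (since $M$ is positive semidefinite by the paper's standing assumption) and the constraint set $\{x \geq 0\}$ is a polyhedron, in particular it satisfies Slater's condition (e.g.~$x = \mathbbb{1}$ is strictly feasible). Under these circumstances, the KKT conditions are both necessary and sufficient for optimality.

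Next, I would write the Lagrangian $L(x, v) = f(x) - \langle v, x\rangle$ with multiplier $v \in \R^d$ associated to the constraint $x \geq 0$. The stationarity condition $\nabla_x L(x,v) = 0$ reads
\begin{equation*}
Mx - r + \lambda \mathbbb{1} - v = 0 \, ,
\end{equation*}
which is exactly the first line of \eqref{eq:first_LCP}. Primal feasibility gives $x \geq 0$; dual feasibility gives $v \geq 0$. Complementary slackness demands $v_i x_i = 0$ for each $i$, and since $v, x \geq 0$ componentwise, this is equivalent to the single scalar condition $\langle v, x\rangle = 0$. This recovers the second line of \eqref{eq:first_LCP}.

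Putting these equivalences together: $x$ is a minimizer of \eqref{eq:positive-lasso} if and only if there exists a KKT multiplier $v$ such that $(v,x)$ solves \eqref{eq:first_LCP}. There is no real obstacle here; the only mild points to be careful about are checking that a constraint qualification holds (handled by the polyhedral constraint set, so KKT conditions genuinely characterize optimality in the convex case) and noting the equivalence between componentwise complementarity and its scalar form $\langle v, x\rangle = 0$ under the nonnegativity constraints.
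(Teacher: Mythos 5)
Your proof is correct and follows essentially the same route as the paper: both write the Lagrangian with a multiplier $v$ for the constraint $x \geq 0$ and identify stationarity, feasibility, and complementary slackness with the three lines of the LCP~\eqref{eq:first_LCP}. Your extra remarks on the constraint qualification and on the equivalence of componentwise and scalar complementarity are harmless additions that the paper leaves implicit.
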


This result is classical and its proof follows from convex duality, see \cite{boyd2004convex}, Sec.~5. We recall it for the sake of completeness.

\begin{proof}
	For simplicity, we denote $q = -r + \left(\lambda+\frac{1}{\mu}\right) \mathbbb{1}$. The Lagrangian associated to~\eqref{eq:positive-lasso} is 
	\begin{equation*}
		L(x, v) = \Lasso(x,\mu) - \langle v, x \rangle = \frac{1}{2} \langle x, M x \rangle + \langle q, x \rangle - \langle v, x \rangle
	\end{equation*}
	where $v \in \R^d$ is the Lagrange multiplier associated to the constraint $x \geq 0$. As the optimization problem \eqref{eq:positive-lasso} is convex, the KKT conditions are necessary and sufficient for optimality. The stationarity condition is 
	\begin{equation*}
		0 = \nabla_{x} L(x,v) = q + M x - v \, ,
	\end{equation*} 
	the feasibility conditions are $x \geq 0$ and $v \geq 0$, and the complementary slackness condition is $\langle v, x \rangle = 0$. This proves Prop.~\ref{prop:lcp-optim}. 
	\end{proof}

We now study the existence and uniqueness of the solution of the LCP~\eqref{eq:first_LCP}. Note that if $M$ is positive definite, there exists a unique solution $(v,x)$, see \cite{cottle2009linear}, Thm.~3.1.6. In the more general case where $M$ is positive semidefinite, existence still holds but uniqueness is more subtle. For instance, the positive lasso~\eqref{eq:positive-lasso} might not have a unique minimizer, thus we might not have uniqueness of the solution in~$x$. However, we have uniqueness in $v$. 

\begin{proposition}
\label{prop:lcp-uniqueness}
There exists a solution of the LCP \eqref{eq:first_LCP}. Moreover, the solution of the LCP is unique in $v$ in the sense that if $(v,x)$ and $(v',x')$ are two solutions of the LCP, then $v=v'$. 
\end{proposition}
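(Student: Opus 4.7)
My plan is to derive both assertions from the primal-dual correspondence established in Proposition~\ref{prop:lcp-optim}, which identifies LCP solutions with minimizers of the positive lasso~\eqref{eq:positive-lasso}.

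For \emph{existence}, it is enough to exhibit a minimizer of $\mindot_{x \geq 0} \Lasso(x, \lambda)$. The objective is a convex quadratic with positive semidefinite Hessian $M$ plus the linear term $\langle -r + \lambda \mathbbb{1}, x\rangle$. Since $r \in \Span M$ by assumption, $\ell$ is bounded below; combined with $\lambda \Vert x \Vert_1 \geq 0$, this shows that $\Lasso(\cdot, \lambda)$ is bounded below on the polyhedron $\{x \geq 0\}$. I would then invoke the Frank--Wolfe theorem, which asserts that any convex quadratic with PSD Hessian that is bounded below on a polyhedron attains its minimum there, to produce a minimizer $x$; Proposition~\ref{prop:lcp-optim} then supplies the associated $v$.

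For \emph{uniqueness in $v$}, suppose $(v,x)$ and $(v',x')$ are two LCP solutions. Subtracting the linear relations gives $v - v' = M(x - x')$, so
\[
\langle v - v', x - x' \rangle = \langle M(x-x'), x - x' \rangle = \Vert x - x' \Vert_M^2 \geq 0.
\]
On the other hand, expanding the inner product and using $\langle v, x \rangle = \langle v', x' \rangle = 0$,
\[
\langle v - v', x - x' \rangle = -\langle v, x' \rangle - \langle v', x \rangle \leq 0,
\]
since $v, v', x, x'$ have nonnegative coordinates. Combining the two displays yields $\Vert x - x' \Vert_M^2 = 0$, i.e.~$M(x - x') = 0$, which forces $v - v' = 0$.

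The main obstacle is the attainment of the minimum in the existence step when $M$ is merely positive semidefinite: the textbook LCP existence theorems (e.g.~\cite[Thm.~3.1.6]{cottle2009linear}) assume $M$ positive definite, and on a PSD quadratic the objective need not be coercive on $\{x \geq 0\}$ even when $\lambda > 0$ (consider a direction in $\ker M$ orthogonal to $\mathbbb{1}$). The Frank--Wolfe theorem for convex quadratics on polyhedra is what rescues the argument in this generality; the uniqueness-in-$v$ part, by contrast, reduces to the two-line $\Vert\cdot\Vert_M$-trick above.
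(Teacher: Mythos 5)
Your proof is correct, and it differs from the paper's mainly in being self-contained where the paper defers to the literature. For existence, both arguments hinge on the same observation — that $\Lasso(\cdot,\lambda)$ is bounded below on $\{x \geq 0\}$ because $r \in \Span M$ makes $\ell$ bounded below and the penalty is nonnegative there — but the paper then closes by citing Eaves' existence theorem for LCPs with positive semidefinite $M$, whereas you close by invoking the Frank--Wolfe attainment theorem for quadratics on polyhedra and then passing back through the KKT correspondence of Prop.~\ref{prop:lcp-optim}. These are essentially two names for the same resolution of the non-coercivity issue you correctly flag (a direction in $\ker M$ orthogonal to both $r$ and $\mathbbb{1}$ would defeat a naive compactness argument), and your route has the advantage of staying entirely within the optimization formulation already set up in the paper. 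For uniqueness in $v$, the paper simply cites \cite[Thm.~3.1.7(d)]{cottle2009linear}; your two-line argument — $v - v' = M(x-x')$, hence $\langle v-v', x-x'\rangle = \Vert x - x'\Vert_M^2 \geq 0$, while complementarity and nonnegativity force $\langle v - v', x - x'\rangle = -\langle v, x'\rangle - \langle v', x\rangle \leq 0$, so $M(x-x') = 0$ and $v = v'$ — is exactly the standard proof behind that citation, and it is pleasant to see it written out; it also foreshadows the $\Vert\cdot\Vert_{M^\dagger}$ manipulations used later in Lemma~\ref{lem:lcp-parametric-lip}. No gaps.
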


\begin{proof}
\emph{Existence.}
By \cite{eaves1971linear}, it suffices to show that \begin{equation*}
\Lasso(\var, \mu) = \ell(\var) + \left(\lambda+\frac{1}{\mu}\right) \langle \mathbbb{1}, \var \rangle
\end{equation*}
is bounded from below on the set $\{\var \, \vert \, \var \geq 0\}$. The first term $\ell(\var)$ is bounded from below on $\R^d$ (as $r \in \Span(M)$), and the second term is nonnegative on $\{\var \, \vert \, \var \geq 0\}$.

\emph{Uniqueness.}
    The result is provided in \cite{cottle2009linear}, Thm.~3.1.7(d). 
\end{proof}

\subsection{The positive lasso regularization path and parametric linear complementarity problems}
\label{sec:param-LCP}

In Sec.~\ref{sec:ucarre}, we consider a solution ${x}(\mu)$ of the positive lasso with varying inverse regularization parameter $\mu > 0$. This describes the lasso regularization path. In this section, we draw a connection with so-called \emph{parametric} LCPs. 

From Prop.~\ref{prop:lcp-optim}, there exists $v(\mu)$ such that $(v(\mu),{x}(\mu))$ is a solution of the LCP 
\begin{align}
	\label{eq:second_LCP}
	\begin{split}
		v(\mu) &= -r + \left(\lambda+\frac{1}{\mu}\right) \mathbbb{1} + M {x}(\mu) \, ,  \\
		v(\mu) & \geq 0 \, , \ {x}(\mu) \geq 0\, , \  \langle v(\mu), {x}(\mu) \rangle = 0 ~.
	\end{split}
\end{align}
Denote $w(\mu) = \mu v(\mu)$ and $z(\mu) = \mu {x}(\mu)$. Then
\begin{align}
	\label{eq:parametric-LCP}
	\begin{split}
		w(\mu) &= -\mu r + \left(1+\mu\lambda\right)\mathbbb{1} + M z(\mu) \, ,  \\
		w(\mu) & \geq 0 \, , \ z(\mu) \geq 0\, , \  \langle w(\mu), z(\mu) \rangle = 0 ~.
	\end{split}
\end{align}
This is a \emph{parametric} LCP, where the parameter is $\mu$, see \cite{cottle2009linear}, Sec.~4.5. 

Note that while the LCP \eqref{eq:second_LCP} is defined only for $\mu > 0$, the LCP \eqref{eq:parametric-LCP} is also defined for $\mu=0$. This extension is rather trivial, the solution of both LCPs being simple for $\mu$ small enough, as shown in the following lemma. 

\begin{lem}
	\label{lem:s-small}
	Assume $0 \leq \mu < \frac{1}{\max(\Vert r - \lambda \mathbbb{1} \Vert_\infty, 1)}$. Then the LCP \eqref{eq:parametric-LCP} has a unique solution $(w(\mu), z(\mu)) = ((1+\mu\lambda)\mathbbb{1}-\mu r, 0)$. If further $\mu > 0$, this means that the LCP \eqref{eq:second_LCP} has a unique solution $(v(\mu), {x}(\mu)) = \left(-r + \left(\lambda+\frac{1}{\mu}\right)\mathbbb{1}, 0\right)$.
\end{lem}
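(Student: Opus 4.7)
The plan is to first verify that the proposed pair solves the LCP, and then establish uniqueness by combining the strict positivity of $\mathbbb{1} - sr$ with the positive semidefiniteness of $M$ inside the complementarity relation.

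\textbf{Existence.} Plugging $z = 0$ into the first line of~\eqref{eq:parametric-LCP} gives $w = \mathbbb{1} - sr$. The hypothesis $s < 1/\Vert r\Vert_\infty$ implies that every coordinate satisfies $(\mathbbb{1}-sr)_i = 1 - s r_i \geq 1 - s\Vert r\Vert_\infty > 0$, so $w \geq 0$ (in fact $w > 0$), and the remaining conditions $z \geq 0$ and $\langle w, z \rangle = 0$ are trivial. The rescaling $v(s) = w(s)/s$, ${x}(s) = z(s)/s$ then gives the corresponding statement for the LCP~\eqref{eq:second_LCP} when $s > 0$.

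\textbf{Uniqueness.} Let $(w, z)$ be any solution of~\eqref{eq:parametric-LCP}. Substituting the defining equation for $w$ into the complementarity condition yields
\begin{equation*}
 0 \;=\; \langle w, z \rangle \;=\; \langle \mathbbb{1} - sr, z \rangle + \langle M z, z \rangle \, .
\end{equation*}
Both terms on the right are nonnegative: $\langle M z, z\rangle \geq 0$ since $M$ is positive semidefinite, and $\langle \mathbbb{1} - sr, z\rangle \geq (1 - s\Vert r\Vert_\infty)\sum_i z_i \geq 0$ since $z \geq 0$ coordinatewise and $1 - s\Vert r\Vert_\infty > 0$. Hence each summand must vanish. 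The vanishing of the first summand, together with $1 - s\Vert r\Vert_\infty > 0$ and $z \geq 0$, forces $z = 0$. Then the first line of~\eqref{eq:parametric-LCP} gives $w = \mathbbb{1} - sr$.

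\textbf{Main obstacle.} This statement is essentially a direct verification, so there is no real technical difficulty. The only point worth singling out is that Proposition~\ref{prop:lcp-uniqueness} does not directly yield uniqueness of the $z$-component in the positive semidefinite setting; here, however, the strict positivity of $w = \mathbbb{1} - sr$ (enforced by the smallness of $s$) upgrades uniqueness in $w$ to uniqueness of the full pair $(w, z)$ via the complementarity relation.
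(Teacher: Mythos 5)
Your proof is correct. The existence part coincides with the paper's. For uniqueness you take a different, more self-contained route: you substitute the affine relation $w = \mathbbb{1} - sr + Mz$ into $\langle w, z\rangle = 0$ and split the result into the two nonnegative terms $\langle \mathbbb{1}-sr, z\rangle$ and $\langle Mz, z\rangle$, forcing $z=0$ directly from the strict positivity of $\mathbbb{1}-sr$ and the positive semidefiniteness of $M$. The paper instead leans on Proposition~\ref{prop:lcp-uniqueness} (uniqueness of the $w$-component for positive semidefinite LCPs, itself cited from Cottle--Pang--Stone): since $\mathbbb{1}-sr$ is \emph{the} solution in $w$ and it is strictly positive, complementarity forces $z=0$ — exactly the ``upgrade'' you describe in your closing remark, though you end up not needing it. Your argument has the advantage of being elementary and independent of the external uniqueness theorem; the paper's is shorter given that Proposition~\ref{prop:lcp-uniqueness} is already established for later use. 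Both are complete.
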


\begin{proof}
	$(w(\mu), z(\mu)) = ((1+\mu\lambda)\mathbbb{1}-\mu r, 0)$ is a solution of \eqref{eq:parametric-LCP}. As the solution is unique in $w$, $(1+\mu\lambda)\mathbbb{1}-\mu r$ is the unique solution in $w$. But as $(1+\mu\lambda)\mathbbb{1}-\mu r > 0$, this imposes that $z(\mu) =0$ is the unique solution in $z$ by complementarity. 
\end{proof}

We now prove a regularity result on the solution of the parametric LCP~\eqref{eq:parametric-LCP}.

\begin{lem}
	\label{lem:lcp-parametric-lip}
	Let $w(\mu)$ be the unique solution in $w$ of the parametric LCP~\eqref{eq:parametric-LCP}. Then $\mu \geq 0 \mapsto w(\mu)$ is absolutely continuous, $\frac{\diff }{\diff \mu}\left(\frac{1}{1+\mu\lambda}w(\mu)\right) \in \Span M$ and $\left\Vert \frac{\diff }{\diff \mu}\left(\frac{1}{1+\mu\lambda}w(\mu)\right) \right\Vert_{M^\dagger} \leq \Vert r \Vert_{M^\dagger}$. 
\end{lem}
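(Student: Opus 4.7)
The plan is to exploit LCP monotonicity in a standard way. Fix $0 \leq s' < s$, set $\Delta s = s - s'$, and choose (not necessarily unique) solutions so that $\Delta w = w(s) - w(s')$ and $\Delta z = z(s) - z(s')$ are defined. The defining affine relation in \eqref{eq:parametric-LCP} gives $\Delta w = -\Delta s \, r + M \Delta z$, and since $r \in \Span M$ by assumption, this already establishes $\Delta w \in \Span M$ (and thereby, after dividing by $\Delta s$ and passing to a limit, $\mathrm{d}w/\mathrm{d}s \in \Span M$).

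Next, the complementarity/feasibility conditions yield the classical LCP monotonicity inequality $\langle \Delta w, \Delta z \rangle \leq 0$: indeed, $\langle w(s), z(s)\rangle = \langle w(s'), z(s')\rangle = 0$ by complementarity, while $\langle w(s), z(s')\rangle, \langle w(s'), z(s)\rangle \geq 0$ by componentwise nonnegativity. Substituting $\Delta w = -\Delta s\, r + M \Delta z$ and rearranging gives
\begin{equation*}
    \langle M \Delta z, \Delta z\rangle \leq \Delta s \, \langle r, \Delta z\rangle,
\end{equation*}
which in particular forces $\Delta s \, \langle r, \Delta z\rangle \geq 0$ since $M \succeq 0$.

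Now I would control $\|\Delta w\|_{M^\dagger}^2$. Because $M^\dagger$ acts as the inverse of $M$ on $\Span M$, and since $M\Delta z, r, \Delta w$ all lie in $\Span M$, one has the identities $\|M\Delta z\|_{M^\dagger}^2 = \langle M \Delta z, \Delta z\rangle$ and $\langle M \Delta z, M^\dagger r \rangle = \langle \Delta z, r\rangle$. Expanding then gives
\begin{equation*}
    \|\Delta w\|_{M^\dagger}^2 = \langle M \Delta z, \Delta z\rangle - 2\Delta s\, \langle \Delta z, r\rangle + \Delta s^2 \|r\|_{M^\dagger}^2 \leq -\Delta s \langle r, \Delta z\rangle + \Delta s^2 \|r\|_{M^\dagger}^2 \leq \Delta s^2 \|r\|_{M^\dagger}^2,
\end{equation*}
where the first inequality uses the monotonicity estimate on $\langle M\Delta z, \Delta z\rangle$ and the second uses $\Delta s \langle r, \Delta z\rangle \geq 0$. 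This yields $\|\Delta w\|_{M^\dagger} \leq \Delta s \, \|r\|_{M^\dagger}$.

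Since $\Delta w \in \Span M$ and $\|\cdot\|$ is equivalent to $\|\cdot\|_{M^\dagger}$ on the finite-dimensional subspace $\Span M$ (where $M^\dagger$ restricts to a positive definite operator), Lipschitz continuity in the Euclidean norm follows, hence absolute continuity, and by Rademacher's theorem $\mathrm{d}w/\mathrm{d}s$ exists almost everywhere. Dividing the displayed inequality by $\Delta s$ and letting $\Delta s \to 0$ at a differentiability point gives $\|\mathrm{d}w/\mathrm{d}s\|_{M^\dagger} \leq \|r\|_{M^\dagger}$. I do not anticipate a major obstacle here; the only subtlety is being careful that $z(s)$ need not be unique, but this is harmless because $\Delta w$ depends only on $M\Delta z$, and every step above uses only $\Delta w$ on the left-hand sides of the final estimates.
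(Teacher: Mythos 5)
Your proposal is correct and follows essentially the same route as the paper: the affine relation $\Delta w = -\Delta s\, r + M\Delta z$, the LCP monotonicity inequality $\langle \Delta w, \Delta z\rangle \leq 0$ obtained from complementarity and nonnegativity, and the $M^\dagger$-seminorm computation. The only (harmless) variation is the last step, where you expand $\Vert \Delta w \Vert_{M^\dagger}^2$ and use the sign of $\Delta s \langle r, \Delta z\rangle$ instead of applying Cauchy--Schwarz to $\langle \Delta w, r\rangle_{M^\dagger}$; both yield the identical bound $\Vert \Delta w \Vert_{M^\dagger} \leq \vert \Delta s\vert\, \Vert r \Vert_{M^\dagger}$.
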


	\begin{proof}
Let $\mu, \mu' \geq 0$ and $(w,z) = (w(\mu), z(\mu))$, $(w',z') = (w(\mu'), z(\mu'))$ denote solutions of the parametric LCP \eqref{eq:parametric-LCP} for the respective values $\mu, \mu'$. Then, we have 
		\begin{align*}
		\frac{1}{1+\mu\lambda}w &=  - \frac{\mu}{1+\mu\lambda} r + \mathbbb{1}+ M\frac{1}{1+\mu\lambda}z\, , \\ 
		\frac{1}{1+\mu'\lambda}w' &=  - \frac{\mu'}{1+\mu'\lambda} r + \mathbbb{1}+ M\frac{1}{1+\mu'\lambda}z' \, .
	\end{align*}
		Hence,
		\begin{align*}
			\frac{1}{1+\mu\lambda}w - \frac{1}{1+\mu'\lambda}w' &= -\left(\frac{\mu}{1+\mu\lambda}-\frac{\mu'}{1+\mu'\lambda}\right)r + M\left(\frac{1}{1+\mu\lambda}z - \frac{1}{1+\mu'\lambda}z'\right) \\
			&\in \Span M \, , 
		\end{align*}
		and 
		\begin{align*}
			&\left\|\frac{1}{1+\mu\lambda}w - \frac{1}{1+\mu'\lambda}w'\right\|^2_{M^{\dagger}} \\
			&\qquad= \Big\langle \frac{1}{1+\mu\lambda}w - \frac{1}{1+\mu'\lambda}w'  , -\left(\frac{\mu}{1+\mu\lambda}-\frac{\mu'}{1+\mu'\lambda}\right)r \\ 
			&\hspace{5cm}+ M\left(\frac{1}{1+\mu\lambda}z - \frac{1}{1+\mu'\lambda}z'\right)\Big\rangle_{M^\dagger} \\ 
			&\qquad=  -\left(\frac{\mu}{1+\mu\lambda}-\frac{\mu'}{1+\mu'\lambda}\right)\left\langle \frac{1}{1+\mu\lambda}w - \frac{1}{1+\mu'\lambda}w'  , r\right\rangle_{M^\dagger} \\ 
			&\qquad\qquad+ \left\langle \frac{1}{1+\mu\lambda}w - \frac{1}{1+\mu'\lambda}w' , \frac{1}{1+\mu\lambda}z - \frac{1}{1+\mu'\lambda}z' \right\rangle  \\
		&\qquad=  -\left(\frac{\mu}{1+\mu\lambda}-\frac{\mu'}{1+\mu'\lambda}\right)\left\langle \frac{1}{1+\mu\lambda}w - \frac{1}{1+\mu'\lambda}w'  , r\right\rangle_{M^\dagger} \\ 
			&\qquad\qquad+ \frac{1}{(1+\mu\lambda)^2}\left\langle w  , z \right\rangle - \frac{1}{(1+\mu\lambda)(1+\mu'\lambda)} \left\langle w  ,  z' \right\rangle \\ 
			&\qquad\qquad- \frac{1}{(1+\mu\lambda)(1+\mu'\lambda)} \left\langle w' , z \right\rangle + \frac{1}{(1+\mu'\lambda)^2}\left\langle w' , z' \right\rangle \, . 
		\end{align*}
		We can use H\"older's inequality to bound the first term. Furthermore, by complementarity slackness $\langle z, w \rangle = \langle z', w' \rangle = 0$. Finally, all vectors $w, w',z, z'$ are nonnegative, thus $\langle w',z \rangle, \langle w,z' \rangle \geq 0$. Thus, we have
		\begin{align*}
			&\left\|\frac{1}{1+\mu\lambda}w - \frac{1}{1+\mu'\lambda}w'\right\|^2_{M^{\dagger}} \\
			&\qquad\leq \left\vert \frac{\mu}{1+\mu\lambda}-\frac{\mu'}{1+\mu'\lambda}\right\vert \left\Vert \frac{1}{1+\mu\lambda}w - \frac{1}{1+\mu'\lambda}w' \right\Vert_{M^{\dagger}} \|r\|_{M^{\dagger}} \, .
		\end{align*}
		As $\mu \mapsto \frac{\mu}{1+\mu\lambda}$ is $1$-Lipschitz, we have
		\begin{align*}
			\left\|\frac{1}{1+\mu\lambda}w - \frac{1}{1+\mu'\lambda}w'\right\|^2_{M^{\dagger}} \leq \vert \mu - \mu' \vert \|r\|_{M^{\dagger}} \, .
		\end{align*}
		This proves that the map $\mu \mapsto \frac{1}{1+\mu\lambda}w(\mu)$ is Lipschitz continuous, thus absolutely continuous. Thus $\mu \mapsto w(\mu)$ is also absolutely continuous. As for all $\mu, \mu'$, $\frac{1}{1+\mu\lambda}w - \frac{1}{1+\mu'\lambda}w' \in \Span M$, we have $\frac{\diff }{\diff \mu} \left(\frac{1}{1+\mu\lambda} w(\mu)\right) \in \Span M$. The inequality above also shows that $\left\Vert \frac{\diff }{\diff \mu}\left(\frac{1}{1+\mu\lambda}w(\mu)\right) \right\Vert_{M^\dagger} \leq \Vert r \Vert_{M^\dagger}$. 
	\end{proof}

\subsection{Proof of Theorem \ref{thm:nonmonotone-u2}}
\label{sec:proof-nonmonotone-u2}

For all $\mu > 0$, ${x}(\mu)$ is a minimizer of $\mindot_{{x}\geq 0} \Lasso\left({x},\mu\right)$. We use the notations of Sec.~\ref{sec:param-LCP}: by Prop.~\ref{prop:lcp-optim}, there exists $v(\mu)$ such that $(v(\mu),{x}(\mu))$ is a solution of the LCP \eqref{eq:second_LCP}. Denote $w(\mu) = \mu v(\mu)$ and $z(\mu) = \mu {x}(\mu)$. Then $(w(\mu), z(\mu))$ is a solution of the parametric LCP \eqref{eq:parametric-LCP}. We extend this definition of $(w,z)$ with $(w(0), z(0)) = (\mathbbb{1}, 0)$, which is a solution of the LCP \eqref{eq:parametric-LCP} for $\mu = 0$.

In this proof, the solution $(w(s), z(s))$ of the parametric LCP taken at $\mu = s$ will be compared to $(w^\varepsilon(s), z^\varepsilon(s))$ where $w^\varepsilon(s)$ is defined in Sec.~\ref{sec:mirror-flow} and 
\begin{align*}
	z^\varepsilon(s) &= s \overline{x}^\varepsilon(s) = \int_{0}^{s}   x^\varepsilon(u)\diff u \, .
\end{align*}
Note that
\begin{align}
	\frac{\diff z^\varepsilon}{\diff s} &= x^\varepsilon(s) \, . \label{eq:derivative-z-epsilon}
\end{align}
Integrating Eq.~\eqref{eq:derivative-w^eps}, we obtain
\begin{equation}
	\label{eq:linear-lcp-eps}
	w^\varepsilon(s) = w^\varepsilon(0) -sr + M z^\varepsilon(s) + s \lambda \mathbbb{1} \, . 
\end{equation}
This equation has a similarity with the first equation of the LCP~\eqref{eq:parametric-LCP}. 

We now seek to bound 
\begin{equation*}
	\Delta^\varepsilon(s) := \frac{1}{2}\Vert z^\varepsilon(s) - z(s) \Vert^2_M \, .  
\end{equation*}
At $s=0$, we have by definition, $z^\varepsilon(0) = 0$ and by Lemma \ref{lem:s-small}, $z(0) = 0$. Thus $\Delta^\varepsilon(s) = 0$. 

We compute the derivative of $\Delta^\varepsilon(s)$:
\begin{align*}
	\frac{\diff \Delta^\varepsilon}{\diff s} &= \left\langle  \frac{\diff z^\varepsilon(s)}{\diff s} - \frac{\diff z(s)}{\diff s},M(z^\varepsilon(s) - z(s))  \right\rangle 
\end{align*}
We use Eqs.~\eqref{eq:derivative-z-epsilon}, \eqref{eq:linear-lcp-eps} and the first equation of \eqref{eq:parametric-LCP}:
\begin{align}
	\label{eq:derivative-Delta-epsilon}
	\begin{split}
	\frac{\diff \Delta^\varepsilon}{\diff s} &= \left\langle   x^\varepsilon(s) - \frac{\diff z(s)}{\diff s}, w^\varepsilon(s) - w(s) + \mathbbb{1} - w^\varepsilon(0) \right\rangle \\
	&= \left\langle   x^\varepsilon(s) , w^\varepsilon(s) \right\rangle - \left\langle   x^\varepsilon(s) ,  w(s) \right\rangle - \left\langle    \frac{\diff z(s)}{\diff s}, w^\varepsilon(s)  \right\rangle \\&\qquad+ \left\langle    \frac{\diff z(s)}{\diff s},  w(s)  \right\rangle 
	+ \left\langle   x^\varepsilon(s) ,  \mathbbb{1} - w^\varepsilon(0) \right\rangle - \left\langle    \frac{\diff z(s)}{\diff s}, \mathbbb{1} - w^\varepsilon(0) \right\rangle\, .
	\end{split}
\end{align}
We now upper-bound each term separately:
\begin{itemize}
	\item The function $u > 0 \mapsto u \log u$ is uniformly lower bounded by a universal constant thus 
	\begin{align*}
		\left\langle   x^\varepsilon(s) , w^\varepsilon(s) \right\rangle &= - \frac{1}{\log \frac{1}{\varepsilon}}\sum_{i=1}^{d} x_i^\varepsilon(s) \log x_i^\varepsilon(s) \leq \frac{C(d)}{\log \frac{1}{\varepsilon}} \, .
	\end{align*}
	\item We have $x^\varepsilon(s) \geq 0$ and $w(s) \geq 0$ thus $-\langle x^\varepsilon(s), w(s) \rangle \leq 0$. 
	\item If $\frac{\diff z_i(s)}{\diff s} \geq 0$, we use that by Prop.~\ref{prop:uniform-bound}, 
	\begin{equation*}
		w_i^\varepsilon(s) = -\frac{1}{\log \frac{1}{\varepsilon}} \log x_i^\varepsilon(s) \geq - \frac{C(d,M,r,\lambda,\alpha)}{\log \frac{1}{\varepsilon}} \, .
	\end{equation*}
	If $\frac{\diff z_i(s)}{\diff s} \leq 0$, we use Eq.~\eqref{eq:derivative-w^eps} and Prop.~\ref{prop:uniform-bound} to obtain 
	\begin{equation*}
		w_i^\varepsilon(s) \leq C(d,M,r,\lambda,\alpha) (1+s) \, . 
	\end{equation*}
	Thus 
	\begin{align*}
		&- \left\langle    \frac{\diff z(s)}{\diff s}, w^\varepsilon(s)  \right\rangle \\
		&\qquad\leq C(d,M,r,\lambda,\alpha) \sum_{i=1}^{d} \left[\frac{1}{\log \frac{1}{\varepsilon}} \left(\frac{\diff z_i(s)}{\diff s}\right)_+ +  (1+s) \left(\frac{\diff z_i(s)}{\diff s}\right)_-\right] \, . 
	\end{align*}
	Recall that 
	\begin{equation*}
		z^\downarrow(s) = \sum_{i=1}^{d}\int_{0}^{s} (1+u)\left(\frac{\diff z_i(u)}{\diff u}\right)_- \diff u \, .
	\end{equation*}
	Denote also \begin{equation*}
	z^\uparrow(s) = \sum_{i=1}^{d}\int_{0}^{s} \left(\frac{\diff z_i(u)}{\diff u}\right)_+ \diff u \ .
	\end{equation*} 
	With these notations, we can write
\begin{equation*}
	- \left\langle    \frac{\diff z(s)}{\diff s}, w^\varepsilon(s)  \right\rangle \leq C(d,M,r,\lambda,\alpha) \left[\frac{1}{\log \frac{1}{\varepsilon}} \frac{\diff z^\uparrow(s)}{\diff s} +   \frac{\diff z^\downarrow(s)}{\diff s}\right] \, . 
\end{equation*}

	\item Let $i \in \{1,\dots,d\}$. We show that $\frac{\diff z_i(s)}{\diff s} w_i(s) = 0$. From Eqs.~\eqref{eq:parametric-LCP}, we have $w_i(s) z_i(s) =0$. If $w_i(s) = 0$, then the conclusion is obvious. Otherwise $z_i(s) = 0$. Differentiating $w_i(s) z_i(s) =0$, we obtain $\frac{\diff w_i(s)}{\diff s} z_i(s) + w_i(s) \frac{\diff z_i(s)}{\diff s} = 0$. As $z_i(s) = 0$, this gives the desired conclusion. Thus, we have $\left\langle    \frac{\diff z(s)}{\diff s}, w(s)  \right\rangle = 0$.

	\item Note that $\mathbbb{1} - w^\varepsilon(0) = \mathbbb{1}+ \frac{1}{\log \frac{1}{\varepsilon}} \log x^\varepsilon(0) = \frac{2}{\log \frac{1}{\varepsilon}} \log \vert \alpha \vert$. This enables to bound the last two terms, using again Prop.~\ref{prop:uniform-bound}. 
\end{itemize}
We thus obtain that 
\begin{align*}
	\frac{\diff \Delta^\varepsilon(s)}{\diff s} \leq C(d,M,r,\lambda,\alpha) \left[\frac{1}{\log \frac{1}{\varepsilon}} \left(1+\frac{\diff z^\uparrow(s)}{\diff s} \right) +   \frac{\diff z^\downarrow(s)}{\diff s}\right] \, , 
\end{align*}
and thus
\begin{equation}
	\label{eq:bound-Delta}
	\Delta^\varepsilon(s) \leq
	C(d,M,r,\lambda,\alpha) \left[\frac{1}{\log \frac{1}{\varepsilon}}  \left(s + z^\uparrow(s)\right) + z^\downarrow(s)\right] \, . 
\end{equation}
At this point, we have bounded $\Vert z^\varepsilon(s) - z(s) \Vert^2_M$, or, equivalently, $\Vert \overline{x}^\varepsilon(s) - {x}(s) \Vert^2_M$, where ${x}(s)$ is a minimizer of the positive lasso with inverse regularization $\mu = s$. However, as $M$ might not be positive definite, we can not conclude directly. 

The positive lasso objective function $\Lasso(x,\mu)$ is a quadratic function on the set of nonnegative vectors $x \geq 0$: 
\begin{align*}
	\Lasso\left(x,\mu\right) = g_\mu(x) :=  \frac{1}{2} \langle x, M x \rangle - \langle r, x \rangle + \left(\lambda+\frac{1}{\mu}\right) \langle \mathbbb{1}, x \rangle \, .
\end{align*}
As a consequence, we can compute $\Lasso(\overline{x}^\varepsilon(s), s) =  g_s(\overline{x}^\varepsilon(s))$	using the second-order approximation of $g_s$ at ${x}(s)$:
\begin{align*}
	&\Lasso\left(\overline{x}^\varepsilon(s), s\right) - \PosLasso_*\left(s\right) = g_s(\overline{x}^\varepsilon(s)) - g_s({x}(s)) \\
	&\qquad= \left\langle \nabla g_s({x}(s)), \overline{x}^\varepsilon(s) - {x}(s) \right\rangle + \frac{1}{2} \Vert \overline{x}^\varepsilon(s) - {x}(s) \Vert^2_M \\
	&\qquad= \left\langle M {x}(s) - r + \left(\lambda+\frac{1}{s}\right) \mathbbb{1}, \overline{x}^\varepsilon(s) - {x}(s) \right\rangle + \frac{1}{2} \Vert \overline{x}^\varepsilon(s) - {x}(s) \Vert^2_M \\
	&\qquad= \left\langle v(s), \overline{x}^\varepsilon(s) - {x}(s) \right\rangle + \frac{1}{2} \Vert \overline{x}^\varepsilon(s) - {x}(s) \Vert^2_M \\ 
	&\qquad = \frac{1}{s^2} E^\varepsilon(s )
\end{align*}
where $E^\varepsilon(s) = \langle w(s), z^\varepsilon(s)- z(s) \rangle + \Delta^\varepsilon(s)$. We compute 
\begin{align*}
	&\frac{\diff }{\diff s}\left(\frac{1}{1+s\lambda} E^\varepsilon(s)\right) = \\
	&\qquad\left\langle \frac{\diff }{\diff s}\left(\frac{1}{1+s\lambda}w(s)\right), z^\varepsilon(s) - z(s) \right\rangle + \frac{1}{1+s\lambda}\left\langle w(s), x^\varepsilon(s) \right\rangle\\
	&\qquad- \frac{1}{1+s\lambda}\left\langle w(s) , \frac{\diff z(s)}{\diff s} \right\rangle + \frac{1}{1+s\lambda}\frac{\diff \Delta^\varepsilon(s)}{\diff s} + \frac{\diff}{\diff s} \left(\frac{1}{1+s\lambda
	}\right)\Delta^\varepsilon(s)\, .
\end{align*}
For the first term, we use the Cauchy-Schwarz inequality and Lemma~\ref{lem:lcp-parametric-lip}. Moreover, several terms cancel out with the expression of the derivative of $\Delta^\varepsilon(s)$ in Eq.~\eqref{eq:derivative-Delta-epsilon}. The last term is nonpositive. We obtain
\begin{align*}
	&\frac{\diff }{\diff s}\left(\frac{1}{1+s\lambda} E^\varepsilon(s)\right) \leq \Vert r \Vert_{M^\dagger} \Vert z^\varepsilon(s) - z(s) \Vert_{M} \\
	&\qquad+ \frac{1}{1+s\lambda}\Big[\left\langle x^\varepsilon(s), w^\varepsilon(s) \right\rangle - \left\langle \frac{\diff z(s)}{\diff s} , w^\varepsilon(s) \right\rangle \\
	&\hspace{2cm}+ \left\langle x^\varepsilon(s), \mathbbb{1} - w^\varepsilon(0)\right\rangle - \left\langle \frac{\diff z(s)}{\diff s}, \mathbbb{1} - w^\varepsilon(0) \right\rangle \Big]\, . 
\end{align*}
All of these terms have been bounded above. We obtain
\begin{align*}
	&\frac{\diff }{\diff s}\left(\frac{1}{1+s\lambda} E^\varepsilon(s)\right) \\
	&\qquad\leq C(d,M,r,\lambda,\alpha) \left[\Delta^\varepsilon(s)^{1/2} + \frac{1}{1+s\lambda}\left(\frac{1}{\log \frac{1}{\varepsilon}} \left(1+\frac{\diff z^\uparrow(s)}{\diff s} \right) +   \frac{\diff z^\downarrow(s)}{\diff s}\right)\right] \\
	&\qquad\leq C(d,M,r,\lambda,\alpha) \left[\Delta^\varepsilon(s)^{1/2} + \frac{1}{\log \frac{1}{\varepsilon}} \left(1+\frac{\diff z^\uparrow(s)}{\diff s} \right) +   \frac{\diff z^\downarrow(s)}{\diff s}\right]\, .
\end{align*}
Integrating between $0$ and $s$ and using Eq.~\eqref{eq:bound-Delta}, we obtain 
\begin{align*}
	E^\varepsilon(s) 
	&\leq C(d,M,r,\lambda,\alpha) (1+s\lambda)\Bigg[ s \left(\frac{1}{\log \frac{1}{\varepsilon}}  \left(s + z^\uparrow(s)\right) +  z^\downarrow(s)\right)^{1/2} \\
	&\qquad\qquad\qquad\qquad\qquad\qquad\qquad+   \frac{1}{\log \frac{1}{\varepsilon}}  \left(s + z^\uparrow(s)\right) +  z^\downarrow(s) \Bigg] \, . 
\end{align*}
This allows us to conclude
\begin{align*}
	&\limsup_{\varepsilon \to 0}\Lasso\left(\overline{x}^\varepsilon(s), s\right) - \PosLasso_*\left(s\right) = \limsup_{\varepsilon \to 0}\frac{1}{s^2}  E^\varepsilon(s ) \\ 
	&\qquad\leq C(d,M,r,\lambda, \alpha) (1+s\lambda)\left[\frac{z^\downarrow(s)^{1/2}}{s} + \frac{z^\downarrow(s)}{s^2} \right] \\
	&\qquad= C(d,M,r,\lambda, \alpha) \eta(\lambda,s,z^\downarrow(s))\, . 
\end{align*}

\subsection{Proof of Theorem \ref{thm:monotone-u2}}
\label{sec:proof-monotone-u2}

	Thm.~\ref{thm:monotone-u2} is an application of Thm.~\ref{thm:nonmonotone-u2}. However, to apply Thm.~\ref{thm:nonmonotone-u2}, we need to prove that $\mu \mapsto {x}(\mu)$ is absolutely continuous. This is proven in the following lemma (in combination with Lemma~\ref{lem:s-small}).

\begin{lem}
	Under the assumptions of Thm.~\ref{thm:monotone-u2}, the function $\mu \geq 0 \mapsto z(\mu)$ is absolutely continuous on compact subsets of $[0,+\infty)$. 

\end{lem}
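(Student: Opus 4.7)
The plan is to decompose $z$ coordinate-wise via the Lebesgue decomposition and to show that the singular part must vanish, using the positive lasso minimality of $x(s)$.

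First, since $w$ is Lipschitz continuous (Lemma~\ref{lem:lcp-parametric-lip}) and $Mz(s) = w(s) - \mathbbb{1} + sr$, the map $s \mapsto Mz(s)$ is Lipschitz. Fix $T > 0$. The monotonicity assumption together with $z(0) = 0$ (Lemma~\ref{lem:s-small}) imply that each $z_i$ is bounded and nondecreasing on $[0, T]$, so it decomposes as $z_i = z_i^{ac} + z_i^{sing}$ with $z_i^{ac}(s) := \int_0^s z_i'(u) \diff u$ absolutely continuous, $z_i^{sing} := z_i - z_i^{ac}$ nondecreasing with derivative zero almost everywhere, and $z_i^{ac}(0) = z_i^{sing}(0) = 0$. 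Then $Mz^{sing} = Mz - Mz^{ac}$ is the difference of two absolutely continuous functions (hence absolutely continuous), with derivative zero almost everywhere; such a function is constant, and evaluating at $s=0$ gives $Mz^{sing} \equiv 0$. Consequently, for every $s \in [0, T]$,
\begin{equation*}
v_s := z^{sing}(s) \in \ker M \cap \R^d_{\geq 0}, \qquad \supp(v_s) \subseteq \{i : x_i(s) > 0\},
\end{equation*}
where the support inclusion follows from $0 \leq v_s \leq z(s) = s x(s)$.

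The final step is to show $v_s = 0$ for every $s > 0$, using that $x(s)$ minimizes the positive lasso at parameter $1/s$. Since $r \in \Span M$, $v_s \in \ker M$, and $M$ is symmetric, one has $\langle r, v_s \rangle = 0$. Because $\supp(v_s) \subseteq \{i : x_i(s) > 0\}$, the perturbation $x(s) - (t/s) v_s$ remains nonnegative (and hence feasible) for $t > 0$ small enough, and a direct computation exploiting $Mv_s = 0$ yields
\begin{equation*}
\Lasso\!\left(x(s) - \frac{t}{s} v_s, \frac{1}{s}\right) = \Lasso\!\left(x(s), \frac{1}{s}\right) - \frac{t}{s^2} \langle \mathbbb{1}, v_s \rangle.
\end{equation*}
If $v_s \neq 0$, then $\langle \mathbbb{1}, v_s \rangle > 0$, strictly decreasing $\Lasso$ and contradicting the minimality of $x(s)$. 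Hence $z^{sing} \equiv 0$ on $[0, T]$, so $z$ is absolutely continuous on $[0, T]$; since $T > 0$ was arbitrary, $z$ is absolutely continuous on every compact subset of $[0, +\infty)$.

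I expect this last step to be the main obstacle: because $M$ is only positive semidefinite, nonzero nonnegative elements of $\ker M$ are not a priori excluded, and thus monotonicity of $z$ combined with Lipschitz continuity of $Mz$ cannot by itself force absolute continuity. The LCP/complementary-slackness structure of the positive lasso, reflected here through the support inclusion $\supp(v_s) \subseteq \supp(x(s))$, is precisely what enables a feasible decreasing perturbation and the resulting contradiction.
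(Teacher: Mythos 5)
Your proof is correct, but it takes a genuinely different route from the paper's. The paper shows directly that $z$ is locally \emph{Lipschitz}: it uses Lemma~\ref{lem:lcp-parametric-lip} to get Lipschitz continuity of $\P_{\Span M} z(s)$, then controls the $\ker M$ component through the identity $0 = \langle w(s), \P_{\Span M} z(s)\rangle + \langle \mathbbb{1}, \P_{\ker M} z(s)\rangle$ (a consequence of complementary slackness and $w(s) \in \mathbbb{1} + \Span M$), and bounds $\Vert z(s_2)-z(s_1)\Vert_1 = \langle \mathbbb{1}, z(s_2)-z(s_1)\rangle$ directly using monotonicity. You instead take the Lebesgue decomposition of the monotone function $z$, use the Lipschitz continuity of $Mz$ to place the singular part in $\ker M \cap \R^d_{\geq 0}$ with support inside $\supp(x(s))$, and kill it by exhibiting a feasible strictly decreasing perturbation of the positive-lasso minimizer. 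Your perturbation step is sound (and is essentially the primal face of the same complementarity information the paper uses on the dual side: $\langle v(s), v_s\rangle = \tfrac{1}{s}\langle \mathbbb{1}, v_s\rangle$ vanishes by complementary slackness, forcing $v_s = 0$). What each approach buys: the paper's argument yields the quantitatively stronger conclusion that $z$ is Lipschitz on compacts with an explicit constant $C_3(d,M,r)(1+s_1+s_2)$, whereas yours yields only absolute continuity --- which is, however, all that is needed to invoke Thm.~\ref{thm:nonmonotone-u2} --- and has the conceptual merit of isolating exactly where non--absolute-continuity could hide (a nonnegative singular increment in $\ker M$ supported on the active set) and why optimality forbids it.
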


\begin{proof}
	From Lemma~\ref{lem:lcp-parametric-lip}, $\mu\geq 0 \mapsto w(\mu)$ is locally Lipschitz continuous. Using the linear relationship $\frac{1}{1+\mu\lambda}w(\mu) = -\frac{\mu}{1+\mu\lambda} r + \mathbbb{1} + M\frac{1}{1+\mu\lambda}z(\mu)$, this implies that $\mu \geq 0 \mapsto P_{\Span M} z(\mu)$ is locally Lipschitz continuous, where $\P_{\Span M}$ denotes the orthogonal projection onto $\Span M$.
	
	Denote also $\P_{\ker M}$ denotes the orthogonal projection onto $\ker M$. We have the complementarity slackness $\langle w(\mu), z(\mu) \rangle = 0$. Recall that $\frac{1}{1+\mu\lambda}w(\mu) \in \mathbbb{1} + \Span M$. Thus we obtain 
	\begin{align*}
		\begin{split}
		0 &= \left\langle \frac{1}{1+\mu\lambda}w(\mu), z(\mu) \right\rangle \\
		&= \left\langle \frac{1}{1+\mu\lambda}w(\mu), \P_{\Span M} z(\mu) \right\rangle + \left\langle \frac{1}{1+\mu\lambda}w(\mu) ,\P_{\ker M} z(\mu) \right\rangle \\
		&= \left\langle \frac{1}{1+\mu\lambda}w(\mu), \P_{\Span M} z(\mu) \right\rangle + \left\langle \mathbbb{1} ,\P_{\ker M} z(\mu) \right\rangle \, .
		\end{split}
	\end{align*}
	As $\mu \geq 0 \mapsto \frac{1}{1+\mu\lambda}w(\mu)$ and $\mu\geq 0 \mapsto \P_{\Span M} z(\mu)$ are locally Lipschitz continuous, the dot product $\mu \geq 0 \mapsto \left\langle \frac{1}{1+\mu\lambda}w(\mu), \P_{\Span M} z(\mu) \right\rangle$ is also locally Lipschitz continuous. As a consequence, the above inequality implies that $\mu \geq 0 \mapsto \langle \mathbbb{1}, \P_{\ker M} z(\mu) \rangle$ is locally Lipschitz continuous.

	Consider $\mu_1  \leq \mu_2$. As $s \mapsto z(\mu)$ is monotone, we have
	\begin{align*}
		\Vert z(\mu_2) - z(\mu_1) \Vert_1 &= \langle \mathbbb{1}, z(\mu_2) - z(\mu_1) \rangle \\
		&= \langle \mathbbb{1}, \P_{\Span M} z(\mu_2) - \P_{\Span M} z(\mu_1) \rangle \\
		&\qquad+ \langle \mathbbb{1}, \P_{\ker M} z(\mu_2)\rangle -\langle  \mathbbb{1}, \P_{\ker M} z(\mu_1) \rangle \, . 
	\end{align*}
	Now fix a compact set $K$. As $\mu \in K \mapsto  \P_{\Span M} z(\mu)$ and $\mu \in K \mapsto \langle \mathbbb{1}, \P_{\ker M} z(\mu) \rangle$ are Lipschitz continuous, there exists a constant $C > 0$ such that for all $\mu_1, \mu_2 \in K$, $\mu_1 \leq \mu_2$,
	\begin{align*}
		\Vert z(\mu_2) - z(\mu_1) \Vert_1 \leq C |\mu_2 - \mu_1|\, .
	\end{align*}
	This shows that $\mu \geq 0 \mapsto z(\mu)$ is locally Lipschitz continuous and thus absolutely continuous on compact subsets of $[0,+\infty)$.
\end{proof}

\section{The \texorpdfstring{$u \circ v$}{uv} case --- proof of the results}
\label{sec:uv-proof}

The strategy of this section is to reduce the $u \circ v$ case to the $u^2$ case. The reduction methods are detailed in Sec.~\ref{sec:reductions}. In Sec.~\ref{sec:reduction-lasso}, we reduce the lasso problem to a positive lasso problem. In Sec.~\ref{sec:reduction-dynamics}, we reduce the DLN dynamics in the $u \circ v$ case to the dynamics in the $u^2$ case. Finally, in Sec.~\ref{sec:proof-uv}, we use these elements to deduce Thm.~\ref{thm:monotone-uv} from Thm.~\ref{thm:monotone-u2} and Thm.~\ref{thm:nonmonotone-uv} from Thm.~\ref{thm:nonmonotone-u2}.

\subsection{Reductions}
\label{sec:reductions}

\subsubsection{Reduction of the lasso to the positive lasso} 
\label{sec:reduction-lasso}

We study the minimization of the lasso objective:
\begin{align*}
	&\mindot_{x \in \R^d} \left\{\Lasso(x,\mu) = \ell(x)+\left(\lambda+\frac{1}{\mu}\right) \Vert x \Vert_1 \right\} \, , &&\ell(x) = \frac{1}{2} \langle x, Mx \rangle -\langle r, x \rangle \, . 
\end{align*}
We want to connect the lasso minimization problem to a positive lasso minimization problem~\eqref{eq:positive-lasso}. This can be seen as a complexification of the lasso minimization problem, as contrained optimization problems are arguably more delicate to deal with than unconstrained ones. However, the lasso objective is quadratic on nonnegative vectors, while it is not on the full space $\R^d$. The constrained minimization of a quadratic has some advantages over the unconstrained minimization of a non-smooth function. 

The strategy is to decompose $x$ as the difference $x = y^{\pos} - y^\neg$ of two nonnegative vectors $y^{\pos}, y^\neg \geq 0$. For instance, a canonical choice would be to take $y^{\pos} = x_+$ and $y^\neg = x_-$. Note that for such a decomposition, 
\begin{equation*}
	\ell(x) = \widetilde{\ell}(y) \, , \qquad y = \begin{pmatrix}
		y^{\pos} \\
		y^\neg
	\end{pmatrix} \, , 
\end{equation*} 
where $\widetilde{\ell}$ is the quadratic function
\begin{align*}
	&\widetilde{\ell}(y) = \frac{1}{2} \langle y, \widetilde{M} y \rangle - \langle \widetilde{r}, y \rangle \, , &&\widetilde{M} = \begin{pmatrix}
		M & -M \\
		-M & M
	\end{pmatrix} \, , &&\widetilde{r} = \begin{pmatrix}
		r \\
		-r
	\end{pmatrix} \, .
\end{align*}
We define the lasso objective associated to $\widetilde{\ell}$:
\begin{align*}
	\widetilde{\Lasso}(y,\mu) = \widetilde{\ell}(y) + \left(\lambda+\frac{1}{\mu}\right) \Vert y \Vert_1 \, . 
\end{align*}
We denote $\widetilde{\PosLasso}(\mu) = \min_{y\geq 0} \widetilde{\Lasso}(y, \mu)$ the minimum of the positive lasso optimization problem associated to $\widetilde{\ell}$.

\begin{lem}
	\label{lem:lasso-reduction}
\begin{enumerate}[label=(\arabic*)]
	\item\label{it:ineq-lasso} For any $y = (y^\pos, y^\neg) \in \R^{2d}_{\geq 0}$, 
	\begin{equation*}
		\Lasso(y^\pos - y^\neg, \mu) \leq \widetilde{\Lasso}(y, \mu) \, , 
	\end{equation*}
	with equality if and only if $\langle y^\pos, y^\neg \rangle = 0$. 
	\item\label{it:minimizer-decomposition} If $z$ minimizes $\Lasso(., \mu)$ over $\R^d$, then $y = (z_+, z_-) \in \R^{2d}_{\geq 0}$ minimizes $\widetilde{\Lasso}(., \mu)$ over $\R^{2d}_{\geq 0}$.
	\item\label{it:minimum-equality} $\Lasso_*(\mu) = \widetilde{\PosLasso}_*(\mu)$. 
\end{enumerate}
\end{lem}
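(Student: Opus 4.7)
The plan is to separate the roles of the quadratic part $\ell$ and the $1$-norm penalty. First I would verify by direct block-matrix computation that $\widetilde{\ell}(y) = \ell(y^\pos - y^\neg)$ for every $y = (y^\pos, y^\neg) \in \R^{2d}$ (no sign condition needed): expanding $\langle y, \widetilde{M} y \rangle = \langle y^\pos, M y^\pos \rangle - 2 \langle y^\pos, M y^\neg \rangle + \langle y^\neg, M y^\neg \rangle = \langle y^\pos - y^\neg, M (y^\pos - y^\neg) \rangle$ and $\langle \widetilde{r}, y \rangle = \langle r, y^\pos - y^\neg \rangle$, this drops out immediately. This is the whole point of the particular block choice of $\widetilde{M}$ and $\widetilde{r}$.

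For part \ref{it:ineq-lasso}, since $y \geq 0$ we have $\|y\|_1 = \langle \mathbbb{1}, y^\pos \rangle + \langle \mathbbb{1}, y^\neg \rangle$, while $\|y^\pos - y^\neg\|_1 = \sum_i |y^\pos_i - y^\neg_i|$. The coordinate-wise triangle inequality $|y^\pos_i - y^\neg_i| \leq y^\pos_i + y^\neg_i$ then gives $\|y^\pos - y^\neg\|_1 \leq \|y\|_1$. Combined with the identity above, this yields $\Lasso(y^\pos - y^\neg, \lambda) \leq \widetilde{\Lasso}(y, \lambda)$. For the equality case, either $\lambda = 0$ trivializes the penalty, or we need equality in every term of the triangle inequality, i.e.~$|y^\pos_i - y^\neg_i| = y^\pos_i + y^\neg_i$ for each $i$, which, given nonnegativity, is equivalent to $y^\pos_i y^\neg_i = 0$ for each $i$, hence to $\langle y^\pos, y^\neg \rangle = 0$.

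For part \ref{it:minimizer-decomposition}, take a minimizer $z$ of $\Lasso(\cdot, \lambda)$ over $\R^d$ and set $y = (z_+, z_-)$. By construction $z = z_+ - z_-$ and $\langle z_+, z_- \rangle = 0$, so the equality case of part \ref{it:ineq-lasso} gives $\widetilde{\Lasso}(y, \lambda) = \Lasso(z, \lambda)$. For any competitor $y' \in \R^{2d}_{\geq 0}$, part \ref{it:ineq-lasso} and the minimality of $z$ chain as
\begin{equation*}
\widetilde{\Lasso}(y', \lambda) \geq \Lasso(y'^\pos - y'^\neg, \lambda) \geq \Lasso(z, \lambda) = \widetilde{\Lasso}(y, \lambda),
\end{equation*}
showing $y$ minimizes $\widetilde{\Lasso}(\cdot, \lambda)$ over $\R^{2d}_{\geq 0}$. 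Part \ref{it:minimum-equality} follows by taking the infimum on both sides of \ref{it:ineq-lasso}: one direction uses that for any $x \in \R^d$, the choice $y = (x_+, x_-)$ gives $\widetilde{\Lasso}(y, \lambda) = \Lasso(x, \lambda)$, so $\widetilde{\PosLasso}_*(\lambda) \leq \Lasso_*(\lambda)$; the reverse direction uses that for any $y \geq 0$, $\Lasso(y^\pos - y^\neg, \lambda) \leq \widetilde{\Lasso}(y, \lambda)$, so $\Lasso_*(\lambda) \leq \widetilde{\PosLasso}_*(\lambda)$.

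There is no substantive obstacle here; the content is entirely algebraic, and the only item requiring a moment of care is bookkeeping the equality case in \ref{it:ineq-lasso}, which rests on the elementary fact that for $a, b \geq 0$, $|a - b| = a + b$ iff $ab = 0$.
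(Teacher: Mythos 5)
Your proof is correct and follows essentially the same route as the paper's: the identity $\widetilde{\ell}(y)=\ell(y^\pos-y^\neg)$ plus the coordinate-wise triangle inequality for part \ref{it:ineq-lasso}, the chain $\widetilde{\Lasso}(y',\lambda)\geq\Lasso(y'^\pos-y'^\neg,\lambda)\geq\Lasso(z,\lambda)=\widetilde{\Lasso}((z_+,z_-),\lambda)$ for part \ref{it:minimizer-decomposition}, and the two-sided comparison for part \ref{it:minimum-equality}. The only (harmless) difference is that you phrase part \ref{it:minimum-equality} via infima rather than via an explicit minimizer, and you spell out the block computation of $\widetilde{\ell}$ that the paper records in the surrounding text.
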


\begin{proof}
	\begin{enumerate}[label=(\arabic*)]
		\item Consider $y = (y^\pos, y^\neg) \in \R^{2d}_{\geq 0}$. 
		\begin{align*}
			\Lasso(y^\pos - y^\neg, \mu) &= \ell(y^\pos - y^\neg) + \left(\lambda+\frac{1}{\mu}\right) \Vert y^\pos - y^\neg \Vert_1 \\
			&\leq \ell(y^\pos - y^\neg) + \left(\lambda+\frac{1}{\mu}\right) \left(\Vert y^\pos \Vert_1 + \Vert y^\neg \Vert_1\right) \\
			&= \widetilde{\ell}(y) + \left(\lambda+\frac{1}{\mu}\right) \Vert y \Vert_1 \\
			&= \widetilde{\Lasso}(y, \mu) \, .
		\end{align*}
		There is equality if and only if there is equality in the triangle inequality, which happens if and only if $\langle y^\pos, y^\neg \rangle = 0$. 
		\item Let $z$ be a minimizer of $\Lasso(., \mu)$ over $\R^d$. Consider $y = (y^\pos, y^\neg) \in \R^{2d}_{\geq 0}$. Then, from \ref{it:ineq-lasso}, 
		\begin{align*}
			\widetilde{\Lasso}(y, \mu) &\geq \Lasso(y^\pos - y^\neg, \mu) \geq \Lasso(z, \mu)  = \widetilde{\Lasso}((z_+, z_-), \mu) \, ,
		\end{align*}
		where the equality holds in the last step as $\langle z_+, z_- \rangle = 0$. As a consequence, $(z_+, z_-)$ minimizes $\widetilde{\Lasso}(., \mu)$ over $\R^{2d}_{\geq 0}$.
		\item From \ref{it:ineq-lasso}, we have $\Lasso_*(\mu) \leq \widetilde{\PosLasso}(\mu)$. Further, if $z$ is a minimizer of $\Lasso(., \mu)$ over $\R^d$, then 
		\begin{align*}
			\Lasso_*(\mu) &= \Lasso(z, \mu) = \widetilde{\Lasso}((z_+, z_-), \mu) \geq \widetilde{\PosLasso}(\mu) \, . 
		\end{align*}
		This proves the equality.
	\end{enumerate}

\end{proof}

\subsubsection{Reduction of dynamics in the \texorpdfstring{$u \circ v$}{uv} case to the \texorpdfstring{$u \circ u$}{u squared} case}
\label{sec:reduction-dynamics}

Recall that the dynamics in the $u\circ v$ case are defined by
\begin{align*}
	&\frac{\diff u}{\diff t} = - \nabla_u L(u,v) = -v \circ \nabla \ell(u\circ v) - \lambda u  \, , \\
	&\frac{\diff v}{\diff t} = - \nabla_v L(u,v) = -u \circ \nabla \ell(u\circ v) - \lambda v  \, .
\end{align*}
Consider 
\begin{align*}
	&p^\pos = \frac{1}{2}(u+v) \, , &&p^\neg = \frac{1}{2}(u-v) \, .
\end{align*}
Then 
\begin{align*}
	&u\circ v = (p^\pos)^2 - (p^\neg)^2 \, .
\end{align*}
Define $\widetilde{t} = \frac{t}{2}$. Then 
\begin{align*}
	\frac{\diff p^\pos}{\diff \widetilde{t}} &= \frac{\diff t}{\diff \widetilde{t}} \frac{1}{2} \left(\frac{\diff u}{\diff t} + \frac{\diff v}{\diff t}\right) = 
	\frac{\diff u}{\diff t} + \frac{\diff v}{\diff t} \\
	&= - v \circ \nabla \ell(u\circ v) - u \circ \nabla \ell(u\circ v) - \lambda (u+v) \\
	&= - 2 p^\pos \circ \nabla \ell((p^\pos)^2 - (p^\neg)^2) - 2 \lambda p^\pos \\
	&= - 2 p^\pos \circ \nabla_{y^\pos} \widetilde{\ell} ((p^\pos)^2, (p^\neg)^2) - 2 \lambda p^\pos \\
\end{align*}
Defining 
\begin{equation}
	\label{eq:def-L-tilde-ap}
	\widetilde{L}(p^\pos, p^\neg) = \widetilde{\ell}((p^\pos)^2, (p^\neg)^2)  + \lambda \left(\Vert p^\pos \Vert^2 + \Vert p^\neg \Vert^2 \right) \, ,
\end{equation}
the above equation can be written as
\begin{equation*}
	\frac{\diff p^\pos}{\diff \widetilde{t}} = - \nabla_{p^\pos} \widetilde{L}(p^\pos, p^\neg) \, . 
\end{equation*}
A similar computation gives that, 
\begin{align*}
	\frac{\diff p^\neg}{\diff \widetilde{t}} 
	&= - \nabla_{p^\neg} \widetilde{L}(p^\pos, p^\neg) \, . 
\end{align*}
Thus, 
\begin{equation}
	\label{eq:red}
	\frac{\diff }{\diff \widetilde{t} } (p^\pos, p^\neg) = - \nabla_{(p^\pos, p^\neg)} \widetilde{L} (p^\pos, p^\neg)  \, .
\end{equation}
Compare Eqs.~\eqref{eq:def-L-tilde-ap}, \eqref{eq:red} with Eqs.~\eqref{eq:def-L-u2}, \eqref{eq:flow-u2-new}. 
We have reduced a gradient flow on $L$ in the $u \circ v$ case to a gradient flow on $\widetilde{L}$ in the $u \circ u$ case.

\subsection{Proofs}
\label{sec:proof-uv}

The proofs of Thms.~\ref{thm:monotone-uv} and \ref{thm:nonmonotone-uv} are provided by reductions to Thms.~\ref{thm:monotone-u2} and \ref{thm:nonmonotone-u2} respectively. 

In each theorem, we are provided a minimizer ${x}(\mu)$ of $\Lasso\left(., \mu\right)$ over $\R^d$. By Lemma~\ref{lem:lasso-reduction}\ref{it:minimizer-decomposition}, $({x}_+(\mu), {x}_-(\mu))$ minimizes $\widetilde{\Lasso}\left(., \mu\right)$ over $\R^{2d}_{\geq 0}$. This minimizer of the positive lasso will be used in applying Thms~\ref{thm:monotone-u2} and \ref{thm:nonmonotone-u2}. 

Further, we use the reduction of Sec.~\ref{sec:reduction-dynamics} from the $u \circ v$ case to the $u^2$ case. We keep the same notations in this section, but now denote the dependence in $\varepsilon$ of the solutions. Thus, with $\widetilde{t} = \frac{t}{2}$, we have 
\begin{equation*}
	\frac{\diff }{\diff \widetilde{t} } (
		p^{\pos,\varepsilon},
		p^{\neg,\varepsilon})
	 = - \nabla_{(p^{\pos,\varepsilon}, p^{\neg, \varepsilon})} \widetilde{L} (
		p^{\pos,\varepsilon} ,
		p^{\neg,\varepsilon})
	  \, .
\end{equation*}
Again, this is an equation of the form \eqref{eq:flow-u2-new} in the $u^2$ case. 

Denote $(y^{\pos,\varepsilon}, y^{\neg,\varepsilon}) = ((p^{\pos,\varepsilon})^2, (p^{\neg,\varepsilon})^2)$ and $(\overline{y}^{\pos,\varepsilon}, \overline{y}^{\neg,\varepsilon})$ its time-average. The application of Thms.~\ref{thm:monotone-u2} and \ref{thm:nonmonotone-u2} decribe the performance of $(\overline{y}^{\pos,\varepsilon}, \overline{y}^{\neg,\varepsilon})$ in the minimization of the positive lasso objective $\widetilde{\Lasso}\left(., {\widetilde{s}}\right)$, where $\widetilde{s}$ is defined in coherence with Eq.~\eqref{eq:rescaled-time-u2}:
\begin{equation*}
	\widetilde{s} = \frac{4}{\log \frac{1}{\varepsilon}} \widetilde{t} = \frac{2}{\log \frac{1}{\varepsilon}} {t} \underset{\text{(Eq.~\eqref{eq:rescaled-time-uv})}}{=} s \, . 
\end{equation*}
Note that there is a correspondence between the initializations: 
\begin{equation*}
	\alpha = \left(\frac{1}{2}(\beta+\gamma),\frac{1}{2}(\beta-\gamma)\right) \, . 
\end{equation*}
As a consequence, the assumption that $\beta_i \neq \pm \gamma_i$, $i = 1, \ldots, d$, is equivalent to $\alpha_i \neq 0$, $i = 1, \ldots, 2d$.

\subsubsection{Proof of Theorem~\ref{thm:monotone-uv}}

In Thm.~\ref{thm:monotone-uv}, it is assumed that $\mu \mapsto \mu{x}(\mu)$ is coordinate-wise monotone. As a consequence, $\mu \mapsto \mu({x}_+(\mu), {x}_-(\mu))$ is coordinate-wise non-decreasing. Thus we can apply Thm.~\ref{thm:monotone-u2}. We obtain 
\begin{equation*}
	\widetilde{\Lasso}\left((\overline{y}^{\pos,\varepsilon}, \overline{y}^{\neg,\varepsilon}), {s}\right) \xrightarrow[\varepsilon \to 0]{} \widetilde{\PosLasso}_*\left({s}\right) \, .
\end{equation*}
We recall from Sec.~\ref{sec:reduction-dynamics} that $x^\varepsilon = u^\varepsilon \circ v^\varepsilon = (p^{\pos, \varepsilon})^2 - (p^{\neg, \varepsilon})^2 = y^{\pos, \varepsilon} - y^{\neg, \varepsilon}$. Thus, using Lemma \ref{lem:lasso-reduction}\ref{it:ineq-lasso}, \ref{it:minimum-equality}, we obtain 
\begin{align*}
	\Lasso\left(\overline{x}^\varepsilon(s), {s}\right) - \Lasso_*\left({s}\right) &= \Lasso\left(\overline{y}^{\pos,\varepsilon}(s)- \overline{y}^{\neg,\varepsilon}(s), {s}\right) - \Lasso_*\left({s}\right) \\
	&\leq \widetilde{\Lasso}\left((\overline{y}^{\pos,\varepsilon}, \overline{y}^{\neg,\varepsilon}), {s}\right) - \widetilde{\PosLasso}_*\left({s}\right) \\
	&\xrightarrow[\varepsilon \to 0]{} 0 \, . 
\end{align*}

\subsubsection{Proof of Theorem~\ref{thm:nonmonotone-uv}}

In Thm.~\ref{thm:nonmonotone-uv}, it is assumed that $\mu \mapsto \mu{x}(\mu)$ is absolutely continuous. As a consequence, $\mu \mapsto \mu({x}_+(\mu), {x}_-(\mu))$ is absolutely continuous. Thus we can apply Thm.~\ref{thm:nonmonotone-u2}. 

Note that the quantity $z^\downarrow(\mu)$ defined in Eq.~\eqref{eq:z-downarrow-u2}, when ${x}(\mu)$ is replaced by $({x}_+(\mu), {x}_-(\mu))$, corresponds to the quantity $z^\downarrow(\mu)$ defined in Eq.~\eqref{eq:z-downarrow-uv}. Thus, applying Thm.~\ref{thm:nonmonotone-u2}, we obtain 
\begin{align*}
	&\limsup_{\varepsilon \to 0} \widetilde{\Lasso} \left((\overline{y}^{\pos,\varepsilon}, \overline{y}^{\neg,\varepsilon}) , {s}\right)  \\
	&\qquad\leq \widetilde{\PosLasso}_*\left({s}\right) + C \eta(\lambda,s,z^\downarrow(s)) \, . 
\end{align*}
Thus, combining with Lemma \ref{lem:lasso-reduction}\ref{it:ineq-lasso}, \ref{it:minimum-equality}, we obtain 
\begin{align*}
	\limsup_{\varepsilon \to 0} {\Lasso} \left(\overline{\var}^\varepsilon(s) , {s}\right) &= \limsup_{\varepsilon \to 0} {\Lasso} \left(\overline{y}^{\pos,\varepsilon}(s)- \overline{y}^{\neg,\varepsilon}(s) , {s}\right) \\
	&\leq \limsup_{\varepsilon \to 0} \widetilde{\Lasso} \left((\overline{y}^{\pos,\varepsilon}, \overline{y}^{\neg,\varepsilon}) , {s}\right) \\
	&\leq \widetilde{\PosLasso}_*\left({s}\right) + C \eta(\lambda,s,z^\downarrow(s)) \\
	&\leq {\Lasso}_*\left({s}\right) + C \eta(\lambda,s,z^\downarrow(s)) \, .
\end{align*}

\section*{Acknowledgements}

The author is grateful to Loucas Pillaud-Vivien for numerous stimulating discussions and for providing valuable insights that have improved the development of this work, and to Florent Krzakala for helpful comments. The author also acknowledges support from the ANR and the Ministère de l’Enseignement Supérieur et de la Recherche.

\addcontentsline{toc}{section}{References}
\bibliographystyle{plain}
\bibliography{bibliography}

\end{document}